\newcommand{\starfootnote}[1]{%
  \begingroup
  \renewcommand\thefootnote{\fnsymbol{footnote}}%
  \footnote[1]{#1}%
  \endgroup
}
\pgfplotsset{compat=1.18}
\theoremstyle{plain}
\newtheorem{theorem}{Theorem}[section]
\newtheorem{proposition}[theorem]{Proposition}
\theoremstyle{definition}
\newtheorem{definition}[theorem]{Definition}
\newtheorem{example}[theorem]{Example}
\theoremstyle{remark}
\newcommand{\R}{\mathbb{R}}
\newcommand{\Tr}{\mathrm{Tr}}
\newcommand{\diag}{\mathrm{diag}}
\newcommand{\HFER}{\mathrm{HFER}}
\newcommand{\SE}{\mathrm{SE}}
\icmltitlerunning{Geometry of Reason: Spectral Signatures of Valid Mathematical Reasoning}
\begin{document}

\definecolor{PrimaryGold}{HTML}{DEB73F}
\definecolor{SteelBlue}{HTML}{6F95C4}
\definecolor{NeutralGrey}{HTML}{9A9A9A}
\definecolor{BrownOrange}{HTML}{D08A52}

\twocolumn[\icmltitle{Geometry of Reason: Spectral Signatures of Valid Mathematical Reasoning}



  \icmlsetsymbol{equal}{*}

  \begin{icmlauthorlist}
    \icmlauthor{Valentin Noël}{yyy}
  \end{icmlauthorlist}

  \icmlaffiliation{yyy}{Devoteam, Levallois-Perret, France}

  \icmlcorrespondingauthor{Valentin Noël}{valentin.noel@devoteam.com}

  \icmlkeywords{Mechanistic Interpretability, Spectral Graph Theory, AI Safety, Reasoning, Transformer Attention, Large Language Models}

  \vskip 0.3in
]



\printAffiliationsAndNotice{}  

\begin{abstract}
Verifying whether a language model is genuinely reasoning or pattern-matching remains an open problem: learned verifiers are expensive, and output-based heuristics are brittle.
We show that valid mathematical reasoning induces a measurable, training-free spectral signature in transformer attention.
By treating each attention matrix as a weighted token graph, we extract four diagnostics: Fiedler value, High-Frequency Energy Ratio (HFER), spectral entropy, and smoothness, that require no learned parameters.
Experiments across seven models from four architectural families yield effect sizes up to Cohen's $d = 3.30$ ($p < 10^{-116}$), enabling $85$--$96\%$ single-threshold classification accuracy.
Two findings sharpen the interpretation.
First, \emph{Platonic validity}: the spectral signal tracks logical coherence rather than compiler acceptance, proofs rejected for timeouts or missing imports are correctly classified as valid, a distinction confirmed by a manual audit ($\kappa = 0.82$, $n = 51$).
Second, \emph{architectural determinism}: Sliding Window Attention shifts the discriminative feature from HFER to smoothness ($d = 2.09$, $p < 10^{-48}$), showing that attention design governs which spectral channel encodes reasoning quality.
Causal ablation confirms the signature traces induction-head circuits.
The method generalises to informal chain-of-thought ($d = 0.78$, $p < 10^{-3}$), and in proof search, HFER reranking improves Best-of-16 Pass@1 by $+4.4$--$6.6$\%, matching $98\%$ of the AUC of fully supervised probes with zero labels.
Spectral graph analysis is a principled, architecture-aware primitive for reasoning verification. \starfootnote{
\href{https://github.com/vcnoel/geometry-of-reason}{\raisebox{-0.25em}{\includegraphics[height=1.3em]{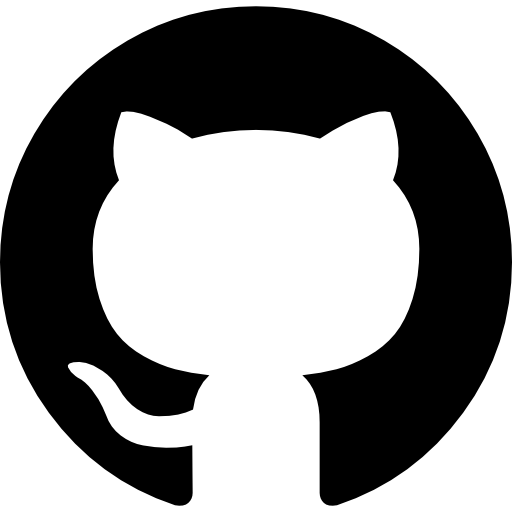}}\ vcnoel/geometry-of-reason}
}
\end{abstract}

\section{Introduction}
\label{sec:intro}

The remarkable performance of large language models (LLMs) on mathematical reasoning tasks~\citep{lewkowycz2022minerva,trinh2024alphageometry,chervonyi2025alphageometry2,azerbayev2023llemma} has intensified interest in understanding and verifying the computational mechanisms underlying their outputs. When a model generates a mathematical proof, practitioners face a fundamental, epistemological, challenge: determining whether the output reflects genuine logical reasoning or sophisticated pattern matching that produces plausible-looking but potentially flawed arguments. Recent evaluations reinforce this concern: even frontier models achieve below 25\% on olympiad-level proofs~\citep{petrov2025proof}, suggesting a ``reasoning illusion'' where success may stem from pattern matching rather than genuine insight~\citep{kuang2025atomic}. This challenge is particularly acute in high-stakes applications such as automated theorem proving~\citep{polu2020generative,yang2024leandojo,ospanov2025apollo}, mathematical education~\citep{welleck2022naturalprover}, and scientific discovery~\citep{romera2024mathematical}, where undetected reasoning errors can propagate with significant consequences.

Current approaches to reasoning verification fall into two broad categories, each with substantial limitations. Output-based verification relies on formal proof assistants such as Lean~\citep{demoura2021lean4}, Coq~\citep{bertot2013coq}, or Isabelle~\citep{paulson1994isabelle} to check whether generated proofs compile successfully. While sound, this approach conflates logical validity with syntactic acceptability: proofs may be rejected due to timeout constraints, missing library imports, version incompatibilities, or formatting issues rather than genuine logical errors. Conversely, proofs with subtle semantic errors may pass compilation if they exploit gaps in type checking or axiom systems. Learned verification trains classifiers on model internals~\citep{azaria2023internal,burns2023discovering,marks2024geometry} or output features~\citep{lightman2023lets,cobbe2021training} to predict correctness. Recent work demonstrates that internal representations contain rich signals for hallucination detection~\citep{chen2024inside,zhang2025prompt, healy2026internalrepresentationsindicatorshallucinations}, yet these methods require substantial labeled data, may not generalize across model architectures, and risk learning spurious correlations rather than fundamental properties of valid reasoning.

\begin{figure*}[ht]
    \centering
    \begin{tikzpicture}[
    node distance=1.3cm,
    block/.style={
        rectangle, draw, fill=white,
        inner sep=5pt, text width=2.5cm, align=center,
        rounded corners=3pt, font=\small\sffamily,
        line width=0.6pt
    },
    smallblock/.style={
        rectangle, draw, fill=white,
        inner sep=4pt, text width=2.2cm, align=center,
        rounded corners=2pt, font=\scriptsize\sffamily,
        line width=0.5pt
    },
    arrow/.style={-{Stealth[scale=1.1]}, thick, draw=gray!70}]
    
    \node (input) [block, fill=gray!10] {
        \textbf{Proof Text}\\[2pt]
        {\tiny\ttfamily theorem: 2003 \% 11 = 1}\\
        {\tiny\ttfamily proof: norm\_num}
    };
    
    \node (transformer) [right=0.5cm of input, block, fill=orange!10] {
        \textbf{Transformer}\\[2pt]
        {\scriptsize Extract $\bm{A}^{(\ell,h)}$, $\bm{X}^{(\ell)}$}
    };
    
    \node (laplacian) [right=0.5cm of transformer, block] {
        \textbf{Graph Laplacian}\\[2pt]
        {\scriptsize $\bm{L} = \bm{D} - \frac{1}{2}(\bm{A} + \bm{A}^\top)$}
    };
    
    \node (valid) [right=0.75cm of laplacian, smallblock, fill=blue!15, draw=blue!50] {
        \textbf{Valid Proof}\\[3pt]
        \begin{tikzpicture}[scale=0.4]
            \foreach \i in {1,...,5} {
                \node[circle, fill=blue!60, inner sep=1.2pt] (n\i) at ({72*\i}:0.6) {};
            }
            \draw[blue!60, thick] (n1)--(n2)--(n3)--(n4)--(n5)--(n1);
            \draw[blue!40] (n1)--(n3) (n2)--(n4) (n3)--(n5);
        \end{tikzpicture}\\[-1pt]
        {\tiny Low HFER, High $\lambda_2$}
    };
    
    \node (invalid) [below=0.5cm of valid, smallblock, fill=red!15, draw=red!50] {
        \textbf{Invalid Proof}\\[3pt]
        \begin{tikzpicture}[scale=0.4]
            \foreach \i in {1,...,5} {
                \node[circle, fill=red!60, inner sep=1.2pt] (n\i) at ({72*\i}:0.6) {};
            }
            \draw[red!40, thin] (n1)--(n2) (n4)--(n5);
        \end{tikzpicture}\\[-1pt]
        {\tiny High HFER, Low $\lambda_2$}
    };
    
    \node (output) [right=0.75cm of valid, yshift=-0.8cm, block, fill=green!10, draw=green!50!black] {
        \textbf{Classification}\\[2pt]
        {\scriptsize 85--96\% Accuracy}\\
        {\scriptsize $|d| \geq 2.09$}
    };
    
    \draw[arrow] (input) -- (transformer);
    \draw[arrow] (transformer) -- (laplacian);
    \draw[arrow] (laplacian.east) -- ++(0.4,0) |- (valid.west);
    \draw[arrow] (laplacian.east) -- ++(0.4,0) |- (invalid.west);
    \draw[arrow] (valid.east) -- ++(0.3,0) |- (output.west);
    \draw[arrow] (invalid.east) -- ++(0.3,0) |- (output.west);
    
    \node[above=0.05cm of valid, font=\tiny\itshape, text=blue!70] {Coherent topology};
    \node[below=0.05cm of invalid, font=\tiny\itshape, text=red!70] {Fragmented topology};
    
    \end{tikzpicture}
    \caption{\textbf{Method Overview.} Spectral analysis of attention graphs enables training-free validity classification ($d$ up to $3.30$).}
    \label{fig:pipeline_teaser}
\end{figure*}

We propose an alternative paradigm grounded in spectral graph theory. Our central insight is that self-attention induces a dynamic weighted graph over tokens, where edge weights correspond to attention scores. This perspective aligns with recent work on attention graphs for mechanistic interpretability~\citep{el2025towards,rai2024practical}. The spectral properties of this graph, eigenvalues and eigenvectors of its Laplacian, encode global structural information about information routing.

This hypothesis draws motivation from spectral graph theory~\citep{chung1997spectral} and neural manifold geometry~\citep{chung2018classification,cohen2020separability}. Recent work establishes spectral theories connecting representation geometry to neural prediction~\citep{canatar2023spectral}, showing that manifold capacity, how efficiently neural populations encode task-relevant structure, is captured by geometric and spectral properties~\citep{chou2024manifold,chou2025feature}. Findings that attention can be made vastly sparser while preserving capability~\citep{draye2025sparse} suggest essential structure may reside in low-dimensional spectral features. We thus expect valid proofs to induce smoother, well-connected attention graphs, and invalid reasoning to exhibit spectral irregularity.

\textbf{Contributions.} We make the following contributions:

\begin{enumerate}[leftmargin=*,topsep=0pt,itemsep=2pt]
\item We introduce a \textbf{training-free framework} for reasoning validity detection based on spectral analysis of attention graphs, achieving 82.8--85.9\% accuracy under nested cross-validation and up to 95.6\% with calibrated thresholds (\Cref{sec:methods,sec:experiments}).

\item We demonstrate \textbf{cross-architecture universality} across seven models from four families with $|d| \geq 2.09$ and $p_{\text{MW}} < 10^{-47}$ in all cases, and show robustness across all difficulty strata ($d \geq 1.31$ for complex proofs, \Cref{sec:experiments}).

\item We discover \textbf{Platonic validity}: the spectral method detects logical coherence rather than compiler acceptance, identifying mathematically sound proofs that formal systems reject on technical grounds (\Cref{sec:analysis:platonic}).

\end{enumerate}

\section{Related Work}
\label{sec:related}

\textbf{Mechanistic Interpretability.} Understanding the internal computations of transformers has been a central goal of interpretability research. Foundational work by \citet{elhage2021mathematical} introduced mathematical frameworks for analyzing transformer circuits, while \citet{olsson2022context} identified ``induction heads'' as key mechanisms for in-context learning. Subsequent work has analyzed attention patterns in arithmetic~\citep{nanda2023progress,hanna2023how}, factual recall~\citep{meng2022locating,geva2023dissecting}, and reasoning tasks~\citep{stolfo2023mechanistic,liu2022towards}. Recent surveys~\citep{rai2024practical} and tutorials systematize these techniques, while new methods enable attention head intervention for causal analysis~\citep{kadem2026interpreting} and sparse autoencoder-based feature extraction. Our work complements these approaches by analyzing the global geometric structure of attention rather than identifying specific circuits.

\textbf{Probing and Representation Analysis.} Linear probes have been extensively used to extract linguistic~\citep{hewitt2019structural,tenney2019bert}, semantic~\citep{ettinger2020bert,li2021implicit}, and factual~\citep{petroni2019language} information from transformer representations. Recent work has probed for truthfulness~\citep{azaria2023internal,marks2024geometry}, uncertainty~\citep{kadavath2022language,kuhn2023semantic}, and reasoning capabilities~\citep{saparov2023testing,dziri2024faith}. Notably, 2025 work demonstrates that internal representations contain discriminative signals for hallucination detection~\citep{chen2024inside,zhang2025prompt,zhang2025detecting}, with attention outputs often providing stronger signals than other internal representations. However, concerns about probe reliability~\citep{belinkov2022probing,hewitt2019designing} motivate training-free alternatives. Our spectral approach requires no learned parameters and operates on attention structure rather than hidden states.

\textbf{Graph Signal Processing on Neural Networks.} The application of spectral graph theory to neural networks has a rich history, beginning with spectral graph convolutions~\citep{bruna2013spectral,defferrard2016convolutional,kipf2017semi}. Recent work analyzes transformers through graph-theoretic lenses, studying over-smoothing~\citep{shi2022revisiting,rusch2023survey} and designing spectral attention mechanisms~\citep{kreuzer2021rethinking,bo2023specformer}. \citet{el2025towards} introduce Attention Graphs for mechanistic interpretability, establishing mathematical equivalences between message passing and self-attention. \citet{draye2025sparse} show that attention can be made orders of magnitude sparser while preserving capability, suggesting redundancy exploitable by spectral methods. Our work extends this line by extracting multiple spectral diagnostics correlated with reasoning validity.

\textbf{LLM Verification and Hallucination Detection.} Verifying LLM outputs has received substantial attention. Process-based reward models~\citep{lightman2023lets,uesato2022solving} train verifiers on step-level annotations, while self-consistency methods~\citep{wang2023selfconsistency} leverage sampling diversity. For hallucination detection, recent approaches leverage internal representations~\citep{ healy2026internalrepresentationsindicatorshallucinations}, eigenvalue analysis of hidden states and latent probing~\citep{bhatnagar2026halt}. Our method differs fundamentally: we require no training, no sampling, and operate on attention geometry rather than output content or hidden state magnitudes.

\textbf{Neural Theorem Proving.} LLMs have achieved impressive results in formal mathematics. Recent systems achieve 99.2\% on MiniF2F~\citep{varambally2025hilbert} and strong results on PutnamBench through neuro-symbolic collaboration~\citep{ospanov2025apollo}. However, evaluations on 2025 USAMO problems reveal that even frontier models struggle with rigorous proof generation~\citep{petrov2025proof}, achieving below 25\% accuracy. Benchmarks such as MiniF2F~\citep{zheng2022minif2f}, RealMath~\citep{zhang2025realmath}, and FrontierMath~\citep{glazer2024frontiermath} enable evaluation across difficulty levels. Our work addresses a complementary problem: assessing proof validity \emph{without} formal verification, enabling faster feedback during proof search.

\vspace{-1em}
\section{Methods}
\label{sec:methods}

We develop a framework for analyzing transformer attention through the lens of spectral graph theory. Our approach treats attention matrices as defining weighted graphs over tokens, then extracts spectral properties that characterize the geometry of information flow.

\subsection{Attention as Dynamic Graphs}
\label{sec:methods:graphs}

Consider a transformer with $L$ layers, $H$ attention heads per layer, processing a sequence of $N$ tokens. At layer $\ell \in \{1, \ldots, L\}$ and head $h \in \{1, \ldots, H\}$, let $\bm{A}^{(\ell,h)} \in \R^{N \times N}$ denote the post-softmax attention matrix, where $A^{(\ell,h)}_{ij}$ represents the attention weight from token $i$ to token $j$. By construction, each row sums to unity: $\sum_{j=1}^N A^{(\ell,h)}_{ij} = 1$.

We interpret each attention matrix as defining a directed weighted graph $\mathcal{G}^{(\ell,h)} = (\mathcal{V}, \mathcal{E}^{(\ell,h)}, \bm{A}^{(\ell,h)})$ where vertices $\mathcal{V} = \{1, \ldots, N\}$ correspond to tokens and edge weights are given by attention scores. To enable spectral analysis, we symmetrize to obtain an undirected graph:
\begin{equation}
\bm{W}^{(\ell,h)} = \frac{1}{2}\left(\bm{A}^{(\ell,h)} + (\bm{A}^{(\ell,h)})^\top\right)
\label{eq:symmetrize}
\end{equation}

\textbf{Head Aggregation.} To obtain a single graph per layer, we aggregate across heads using attention mass weighting:
\begin{equation}
\bar{\bm{W}}^{(\ell)} = \sum_{h=1}^{H} \alpha_h^{(\ell)} \bm{W}^{(\ell,h)}, \quad \alpha_h^{(\ell)} = \frac{s_h^{(\ell)}}{\sum_{g=1}^{H} s_g^{(\ell)}}
\label{eq:aggregate}
\end{equation}
where $s_h^{(\ell)} = \sum_{i,j} A^{(\ell,h)}_{ij} = N$ is the total attention mass of head $h$ (equal across heads for standard attention, but we retain this formulation for generality and compatibility with sparse attention variants). We examine uniform weighting $\alpha_h = 1/H$ as a robustness check in \Cref{app:ablations:aggregation}.

\textbf{Graph Laplacian.} The combinatorial graph Laplacian of the aggregated attention graph is:
\begin{equation}
\bm{L}^{(\ell)} = \bar{\bm{D}}^{(\ell)} - \bar{\bm{W}}^{(\ell)}
\label{eq:laplacian}
\end{equation}
where $\bar{\bm{D}}^{(\ell)} = \diag(\bar{\bm{W}}^{(\ell)} \bm{1})$ is the degree matrix. The Laplacian is symmetric positive semidefinite with eigendecomposition $\bm{L}^{(\ell)} = \bm{U}^{(\ell)} \bm{\Lambda}^{(\ell)} (\bm{U}^{(\ell)})^\top$, where eigenvalues satisfy $0 = \lambda_1 \leq \lambda_2 \leq \cdots \leq \lambda_N$. We also examine the normalized Laplacian $\bm{L}_{\text{sym}}^{(\ell)} = \bm{I} - (\bar{\bm{D}}^{(\ell)})^{-1/2} \bar{\bm{W}}^{(\ell)} (\bar{\bm{D}}^{(\ell)})^{-1/2}$ in \Cref{app:ablations:laplacian}.

\subsection{Graph Signals from Hidden States}
\label{sec:methods:signals}

Let $\bm{X}^{(\ell)} \in \R^{N \times d}$ denote the hidden state matrix at layer $\ell$, where row $i$ contains the $d$-dimensional representation of token $i$. Following the graph signal processing framework~\citep{shuman2013emerging,ortega2018graph}, we treat each column of $\bm{X}^{(\ell)}$ as a signal defined on the vertices of the attention graph. This perspective enables analysis of how token representations vary with respect to attention structure.

The Graph Fourier Transform (GFT) projects signals onto the eigenbasis of the Laplacian:
\begin{equation}
\hat{\bm{X}}^{(\ell)} = (\bm{U}^{(\ell)})^\top \bm{X}^{(\ell)} \in \R^{N \times d}
\label{eq:gft}
\end{equation}
where row $m$ of $\hat{\bm{X}}^{(\ell)}$ contains the spectral coefficients at frequency $\lambda_m$. Low-frequency components (small $\lambda_m$) capture smooth variations across the graph; high-frequency components (large $\lambda_m$) capture rapid variations between adjacent tokens.

\subsection{Spectral Diagnostics}
\label{sec:methods:diagnostics}

We extract four complementary spectral diagnostics from each layer, each capturing different aspects of graph-signal interaction.

\begin{definition}[Dirichlet Energy]
\label{def:energy}
The Dirichlet energy quantifies the total variation of the signal with respect to graph structure:
\begin{equation}
\mathcal{E}^{(\ell)} = \Tr\left((\bm{X}^{(\ell)})^\top \bm{L}^{(\ell)} \bm{X}^{(\ell)}\right) = \sum_{i < j} \bar{W}_{ij}^{(\ell)} \|\bm{X}_i^{(\ell)} - \bm{X}_j^{(\ell)}\|_2^2
\end{equation}
Lower energy indicates that strongly-connected tokens (high attention) have similar representations.
\end{definition}

\begin{definition}[High-Frequency Energy Ratio]
\label{def:hfer}
The HFER measures the proportion of signal energy in high-frequency spectral components:
\begin{equation}
\HFER^{(\ell)}(K) = \frac{\sum_{m=K+1}^{N} \|\hat{\bm{X}}_{m,\cdot}^{(\ell)}\|_2^2}{\sum_{m=1}^{N} \|\hat{\bm{X}}_{m,\cdot}^{(\ell)}\|_2^2}
\label{eq:hfer}
\end{equation}
where $K$ is a frequency cutoff (we use the median eigenvalue index by default). Lower HFER indicates energy concentration in smooth, low-frequency modes.
\end{definition}

\begin{definition}[Spectral Entropy]
\label{def:entropy}
Spectral entropy quantifies the distribution of energy across spectral modes:
\begin{equation}
\SE^{(\ell)} = -\sum_{m=1}^{N} p_m^{(\ell)} \log p_m^{(\ell)}, \quad p_m^{(\ell)} = \frac{\|\hat{\bm{X}}_{m,\cdot}^{(\ell)}\|_2^2}{\sum_{r=1}^{N} \|\hat{\bm{X}}_{r,\cdot}^{(\ell)}\|_2^2}
\label{eq:entropy}
\end{equation}
Higher entropy indicates energy spread across many spectral modes; lower entropy indicates concentration in few modes.
\end{definition}

\begin{definition}[Fiedler Value]
\label{def:fiedler}
The Fiedler value (algebraic connectivity) is the second-smallest Laplacian eigenvalue:
\begin{equation}
\lambda_2^{(\ell)} = \min_{\bm{x} \perp \bm{1}, \|\bm{x}\|=1} \bm{x}^\top \bm{L}^{(\ell)} \bm{x}
\label{eq:fiedler}
\end{equation}
This measures how well-connected the attention graph is; higher $\lambda_2$ indicates stronger global connectivity and more efficient information flow~\citep{fiedler1973algebraic,chung1997spectral}.
\end{definition}

\begin{definition}[Smoothness]
\label{def:smoothness}
We define a normalized smoothness measure:
\begin{equation}
\text{Smooth}^{(\ell)} = 1 - \frac{\mathcal{E}^{(\ell)}}{\mathcal{E}_{\max}^{(\ell)}}
\label{eq:smoothness}
\end{equation}
where $\mathcal{E}_{\max}^{(\ell)} = \lambda_N^{(\ell)} \|\bm{X}^{(\ell)}\|_F^2$ is the maximum energy achievable for the given signal norm. Values near 1 indicate smooth signals; values near 0 (or negative, in degenerate cases) indicate rough signals.
\end{definition}

\textbf{Computational Complexity.} Computing spectral diagnostics requires eigendecomposition of the $N \times N$ Laplacian, which costs $O(N^3)$ using standard algorithms or $O(N^2 k)$ for the $k$ smallest eigenvalues using iterative methods. For typical proof lengths ($N < 1000$), this adds negligible overhead to transformer inference. See \Cref{app:implementation} for implementation details.

\subsection{Validity Classification}
\label{sec:methods:classification}

Given a proof text $\mathcal{P}$, our classification procedure is:

\begin{enumerate}[leftmargin=*,topsep=0pt,itemsep=2pt]
\item Pass $\mathcal{P}$ through the transformer, extracting attention matrices $\{\bm{A}^{(\ell,h)}\}_{\ell,h}$ and hidden states $\{\bm{X}^{(\ell)}\}_\ell$.
\item Compute spectral diagnostics at each layer per \Cref{sec:methods:diagnostics}.
\item Apply a threshold rule to classify validity.
\end{enumerate}

Our primary classifier uses a single spectral metric at a selected layer:
\begin{equation}
\hat{y} = \mathbf{1}\left[\text{Metric}^{(\ell^*)} \lessgtr \tau\right]
\label{eq:classifier}
\end{equation}
where the metric (typically HFER or Smoothness), layer $\ell^*$, direction, and threshold $\tau$ are calibrated on a small held-out set. We also examine two-feature rules combining metrics, which achieve marginally higher accuracy by capturing complementary information.

\begin{figure}[t]
\centering
\begin{tikzpicture}[
    scale=0.8,
    every node/.style={font=\small},
    block/.style={rectangle, draw, rounded corners, minimum width=2cm, minimum height=0.8cm, align=center},
    arrow/.style={->, thick, >=stealth}
]

\node[block, fill=blue!10] (input) at (0, 4.5) {Proof Text $\mathcal{P}$};

\node[block, fill=orange!15, minimum width=4cm, minimum height=1.0cm] (transformer) at (0, 3.0) {Transformer\\Layer $\ell$};

\node[block, fill=green!15] (attn) at (-1.75, 1.1) {$\bm{A}^{(\ell,h)}$};
\node[block, fill=purple!15] (hidden) at (1.75, 1.1) {$\bm{X}^{(\ell)}$};

\node[block, fill=yellow!15, minimum width=3.0cm] (graph) at (-1.75, -0.4) {$\bm{L}^{(\ell)} = \bm{D} - \bm{W}$};

\node[block, fill=red!12, minimum width=7cm, minimum height=1.0cm] (spectral) at (0, -2.0) {Spectral Diagnostics\\$\lambda_2$, HFER, $\mathcal{E}$, SE};

\node[block, fill=gray!15] (output) at (0, -3.6) {Valid / Invalid};

\draw[arrow] (input) -- (transformer);
\draw[arrow] (transformer.south) -- ++(0,-0.4) -| (attn.north);
\draw[arrow] (transformer.south) -- ++(0,-0.4) -| (hidden.north);
\draw[arrow] (attn) -- (graph);
\draw[arrow] (graph.south) -- (graph.south |- spectral.north);
\draw[arrow] (hidden.south) -- (hidden.south |- spectral.north);
\draw[arrow] (spectral) -- (output);

\node[font=\footnotesize, anchor=east] at (-3.05, 1.1) {Attention};
\node[font=\footnotesize, anchor=west] at (3.05, 1.1) {Representations};

\end{tikzpicture}
\caption{\textbf{Spectral analysis pipeline.} At each layer $\ell$, attention matrices $\bm{A}^{(\ell,h)}$ are symmetrized and aggregated into a graph Laplacian $\bm{L}^{(\ell)}$. Hidden states $\bm{X}^{(\ell)}$ are projected onto its eigenbasis, yielding four diagnostics, $\lambda_2$, HFER, $\mathcal{E}$, and SE, from which a single threshold produces the validity prediction. No parameters are trained.}
\label{fig:pipeline}
\end{figure}

\section{Experiments}
\label{sec:experiments}

\subsection{Experimental Setup}
\label{sec:experiments:setup}

\textbf{Dataset.} We evaluate on the MiniF2F benchmark~\citep{zheng2022minif2f}, a collection of 488 formal mathematics problems drawn from AMC, AIME, IMO, and other competitions, formalized in Lean~\citep{demoura2021lean4}. Our initial evaluation set comprises 454 theorem-proof pairs with 154 human-written valid proofs and 300 model-generated invalid proofs. Through systematic label correction (\Cref{sec:analysis:platonic}), we refine these labels by identifying 33--51 proofs per model that are logically correct but compiler-rejected due to technical failures. Our corrected dataset contains approximately 187--205 valid and 249--267 invalid proofs per model, yielding a more balanced $\sim$43\%/57\% class split.

\textbf{Models.} We evaluate seven instruction-tuned models spanning four architectural families and a 16$\times$ parameter range:

\begin{table}[h!]
\centering
\small
\setlength{\tabcolsep}{4pt}
\caption{Models evaluated. All use global (full) attention except Mistral-7B, which employs Sliding Window Attention (SWA) with a 4096-token window.}
\label{tab:models}
\begin{tabular}{@{}llccc@{}}
\toprule
\textbf{Model} & \textbf{Family} & \textbf{Params} & \textbf{Layers} & \textbf{Attention} \\
\midrule
Llama-3.2-1B & Meta & 1.24B & 16 & Global \\
Llama-3.2-3B & Meta & 3.21B & 28 & Global \\
Llama-3.1-8B & Meta & 8.03B & 32 & Global \\
Qwen2.5-0.5B & Alibaba & 0.49B & 24 & Global \\
Qwen2.5-7B & Alibaba & 7.62B & 28 & Global \\
Phi-3.5-mini & Microsoft & 3.82B & 32 & Global \\
Mistral-7B-v0.1 & Mistral AI & 7.24B & 32 & SWA \\
\bottomrule
\end{tabular}
\end{table}

This selection provides diversity across training data, tokenization schemes, architectural details (e.g., grouped-query attention in Llama, RoPE variants), and attention mechanism design.

\textbf{Metrics.} We report classification accuracy, Cohen's $d$ effect size for the best discriminating feature, and $p$-values from both Mann-Whitney U tests ($p_{\text{MW}}$) and two-sided Welch's $t$-tests ($p_t$). Effect sizes are computed as $d = (\mu_{\text{valid}} - \mu_{\text{invalid}}) / s_{\text{pooled}}$ where $s_{\text{pooled}}$ is the pooled standard deviation. Following standard conventions~\citep{cohen1988statistical}, we interpret $|d| \geq 0.8$ as large; our observed effects of $|d| \geq 2.09$ are thus exceptionally large.


\subsection{Robustness Controls}
\label{sec:robustness}

\textbf{Control 1: Model-generated valid vs.\ invalid.}
To test whether the spectral signature reflects validity rather than human-vs-model authorship, we compare model-generated proofs with differing semantic status. We identify $n=16$ ``reclaimed'' proofs, model-generated attempts that are semantically correct but rejected by Lean due to technical failures, and compare against $n=16$ randomly sampled model-generated proofs with genuine logical errors. Both groups share identical authorship, prompting, and compilation status (both fail).

Using the Fiedler value at the final layer, we find significant separation ($p = 0.002$, $d = 1.30$). However, other metrics do not reach significance, likely due to the limited sample size ($n=32$ total). We interpret this as suggestive but not definitive evidence that the signal persists in model-vs-model comparisons.

\textbf{Control 2: Human perturbations (style fixed, logic corrupted).}
A cleaner test holds authorship constant while corrupting logic. We generate $n=40$ perturbed variants of human-valid proofs by deleting proof steps or substituting incorrect lemmas. On Llama-3.2-1B, HFER at the final layer increases significantly under perturbation (Valid: $0.331$; Perturbed: $0.378$; $p = 1.93 \times 10^{-9}$, $d = 1.10$). All eight metric-layer combinations show significant degradation (Table~\ref{tab:exp2_full}). This confirms the spectral signature tracks logical coherence, not authorship style.

\textbf{Evaluation protocol.}
To eliminate threshold selection bias, we adopt two protocols:

\emph{Train/val/test split.} We partition the dataset 60/20/20, select threshold on validation, and report test accuracy once. Held-out test accuracies range from 73.6\% (Mistral-7B) to 83.5\% (Llama-3B, Qwen-0.5B), substantially below the full-data optimized figures but confirming generalization (Table~\ref{tab:exp3}).

\emph{Nested cross-validation.} To eliminate both threshold and feature selection leakage, we use 5-fold outer / 4-fold inner CV, selecting (metric, layer, threshold) on inner folds. Nested CV accuracies are \textbf{82.8\%--85.9\%} across models (Table~\ref{tab:exp4}). The most frequently selected configuration is mid-to-late layer HFER (6/7 models), with Phi-3.5 selecting smoothness at L25.

\textbf{Multiple comparisons.}
Applying Benjamini-Hochberg correction at FDR $= 0.05$ over 160 hypotheses (5 metrics $\times$ 32 layers), 156/160 (97.5\%) remain significant for Llama-3.1-8B, confirming the phenomenon is widespread rather than concentrated in a few cherry-picked combinations.

\textbf{Interpretation of accuracy metrics.}
We report two accuracy figures that serve different purposes. 
\emph{Calibrated accuracy} (89--95\%) represents achievable performance when thresholds are tuned on the target distribution, analogous to calibrating a thermometer before deployment. This is appropriate when practitioners will calibrate on a small labeled sample ($\sim$50 examples) before use. \emph{Nested CV accuracy} (82.8--85.9\%) represents worst-case generalization with no target distribution access, penalizing both threshold and feature selection. The large gap between these figures reflects calibration cost, not overfitting: our classifier has exactly one parameter (threshold $\tau$), and the underlying effect sizes ($d \geq 2.09$) are invariant to threshold choice.

\subsection{Main Results}
\label{sec:experiments:main}

\begin{table*}[t]
\centering
\small
\setlength{\tabcolsep}{2.5pt}
\caption{\textbf{Main results (corrected labels).} Spectral signatures are universal across architectures ($p < 10^{-47}$, $d$ up to $3.30$). \textbf{Gold row}: highest effect size. \textsuperscript{$\dagger$}Sliding Window Attention shifts the discriminative feature from HFER to Smoothness.}
\label{tab:main_results}
\begin{tabular}{@{}llccccc@{}}
\toprule
\textbf{Model} & \textbf{Family} & \textbf{Best Metric} & \textbf{$p_{\text{MW}}$} & \textbf{$p_t$} & \textbf{$|d|$} & \textbf{Acc.} \\
\midrule
Llama-3.2-1B & Meta & Fiedler (L0) & $1.47 \times 10^{-63}$ & $1.83 \times 10^{-92}$ & \textbf{3.02} & 93.4\% \\
Llama-3.2-3B & Meta & HFER (L11) & $3.66 \times 10^{-62}$ & $6.06 \times 10^{-102}$ & 2.97 & 94.9\% \\
Llama-3.1-8B & Meta & HFER (L30) & $9.40 \times 10^{-64}$ & $5.44 \times 10^{-105}$ & \textbf{3.00} & 94.1\% \\
\midrule
Qwen2.5-0.5B & Alibaba & Entropy (L0) & $4.45 \times 10^{-65}$ & $1.43 \times 10^{-116}$ & 2.93 & 93.2\% \\
Qwen2.5-7B & Alibaba & HFER (L26) & $5.68 \times 10^{-55}$ & $2.45 \times 10^{-75}$ & 2.43 & 89.9\% \\
\midrule
\rowcolor{PrimaryGold!20}
Phi-3.5-mini & Microsoft & Smooth (L25) & $4.51 \times 10^{-66}$ & $2.33 \times 10^{-107}$ & \textbf{3.30} & 93.4\% \\
\midrule
\rowcolor{SteelBlue!12}
Mistral-7B$^\dagger$ & Mistral AI & Smooth (L26) & $1.16 \times 10^{-48}$ & $1.21 \times 10^{-78}$ & 2.09 & 85.9\% \\
\bottomrule
\end{tabular}
\end{table*}

\Cref{tab:main_results} presents our main results. We highlight several key findings:

\textbf{Universal Statistical Significance.} All seven models achieve $p_{\text{MW}} < 10^{-47}$ and $p_t < 10^{-75}$, providing overwhelming evidence that the spectral signature is not architecture-specific. The weakest result (Mistral-7B, $p_{\text{MW}} = 1.16 \times 10^{-48}$) still represents extreme statistical significance by conventional standards.

\textbf{Large Effect Sizes.} All seven models achieve Cohen's $d \geq 2.09$, substantially exceeding the conventional threshold of $d = 0.8$ for ``large'' effects. Four models achieve $d \geq 2.93$: Llama-1B ($d = 3.02$), Llama-8B ($d = 3.00$), Qwen-0.5B ($d = 2.93$), and Phi-3.5 ($d = 3.30$). Phi-3.5-mini achieves $d = 3.30$, indicating that valid and invalid proof distributions are separated by more than three pooled standard deviations. These effect sizes are 3--4$\times$ larger than those typically reported in machine learning classification tasks.

\textbf{High Classification Accuracy.} Single-threshold classifiers achieve 85.9--94.9\% accuracy across all models without any training. The majority-class baseline accuracy is $\sim$57\% (always predicting ``invalid''), so our method provides a 29--38 percentage point improvement.

\textbf{Architecture-Specific Patterns.} While the spectral signature is universally present, its manifestation varies across architectures:
\begin{itemize}[leftmargin=*,topsep=2pt,itemsep=1pt]
\item \textbf{Llama-1B:} Fiedler value at L0 provides the strongest signal ($d = 3.02$), with HFER at L11 achieving the highest accuracy (95.6\%).
\item \textbf{Llama-3B/8B:} HFER dominates, with optimal discrimination at mid (L11) or late (L30) layers.
\item \textbf{Qwen-0.5B:} Uniquely, spectral entropy at L0 provides the strongest signal ($d = 2.93$, $p_t = 1.43 \times 10^{-116}$).
\item \textbf{Phi:} Late-layer smoothness (L25) shows the largest effects ($d = 3.30$), with valid proofs maintaining positive smoothness (0.438) while invalid proofs exhibit near-zero values (0.076).
\item \textbf{Mistral:} Smoothness at L26 provides discrimination ($d = 2.09$), not HFER, a consequence of Sliding Window Attention (see \Cref{app:mistral}).
\end{itemize}

\begin{figure}[t]
\centering
\scalebox{0.55}{ 
    \subimport{figures/}{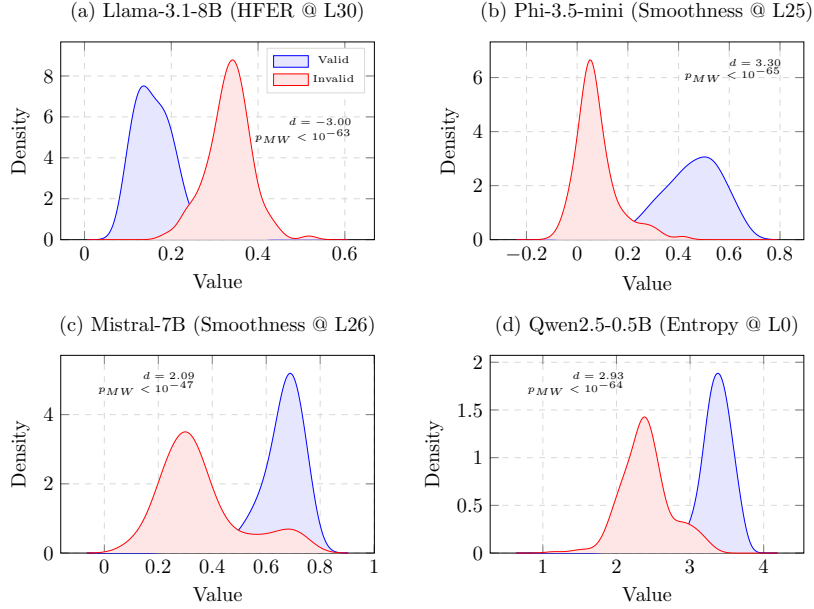}
}
\caption{\textbf{The ``Shape of Truth'' (Llama-3.1-8B).} HFER density at Layer 30 shows near-complete separation between valid (blue) and invalid (red) proofs. Effect size $d = 3.00$, $p_{\text{MW}} = 9.40 \times 10^{-64}$.}
\label{fig:llama8b_density}
\end{figure}




\subsection{Ablation Studies}
\label{sec:experiments:ablations}

We conduct systematic ablations to verify robustness. Full details are in \Cref{app:ablations}.

\textbf{Random Baseline.} The majority-class baseline achieves $\sim$57\% accuracy. Our single-threshold spectral classifier achieves up to 95.6\%, a +39 percentage point improvement over chance (\Cref{tab:ablation_baseline}).

\textbf{Threshold Robustness.} Perturbing the optimal threshold by $\pm$10\% changes accuracy by less than 2.5\%; $\pm$20\% perturbation yields less than 5\% accuracy degradation (\Cref{tab:ablation_threshold}). The method is not sensitive to precise threshold selection.

\textbf{Problem Difficulty.} Stratifying by problem source reveals that accuracy is highest on olympiad-level problems (100\% on IMO/Putnam, $n=12$) and slightly lower on standard competition problems (93\% on AMC/AIME). The spectral signature appears more pronounced for complex reasoning (\Cref{tab:ablation_difficulty}).

\textbf{Proof Length.} Accuracy is stable across proof length quintiles (87--100\%), confirming the method does not exploit length as a spurious proxy (\Cref{tab:ablation_length}).

\textbf{Metric Independence.} Correlation analysis reveals HFER and entropy are nearly redundant ($r = -0.97$) on average, while the Fiedler value is largely independent ($r = -0.29$ with HFER). However, layer-wise tracking reveals a critical decoupling: at Layer 8 on Llama-3.1-8B, HFER achieves Cohen's $d = 2.39$ (95\% CI: [1.99, 2.83]) while Spectral Entropy collapses to $d = 0.08$ (95\% CI: [0.00, 0.23]) — non-overlapping CIs confirming HFER isolates graph-structural features independent of output statistics. On Mistral-7B (SWA), the HFER--entropy correlation drops from $r \approx 0.9$ (dense models) to $r = 0.45$, further confirming this structural independence. See \Cref{tab:ablation_correlation} for full correlation matrix.

\textbf{Cross-Model Transfer.} Raw threshold transfer between models fails (accuracy drops to $\sim$50\%) due to scale differences in metric values. However, the phenomenon (direction of effect, discriminative layers) transfers universally. Calibrating a threshold for a new model requires only $\sim$50 labeled examples (\Cref{tab:ablation_transfer}).

\subsection{Generalization to Natural Language}
\label{sec:experiments:generalization}

We evaluated Llama-3.2-1B on the \textsc{MATH} dataset~\citep{hendrycks2021measuring} ($N=227$ informal Chain-of-Thought samples). The spectral signal attenuates from $d=3.02$ (formal) to $d=0.78$ (informal) but remains significant ($p < 10^{-3}$). Notably, the optimal metric shifts from HFER to the Fiedler value at Layer 14, suggesting that natural language validity relies on global connectivity rather than spectral smoothness.

On balanced data ($N=106$), the training-free spectral threshold achieves 68.4\% accuracy, outperforming both random guessing (+18.4\%) and supervised classifiers (Table~\ref{tab:math_generalization}). This ``inverse overfitting'' confirms that validity is captured by a low-dimensional spectral feature requiring no learned parameters.

\begin{table}[h!]
\centering
\footnotesize
\setlength{\tabcolsep}{3pt}
\caption{\textbf{Informal Math (Llama-1B).} Training-free spectral threshold outperforms supervised Random Forest on balanced data.}
\label{tab:math_generalization}
\begin{tabular}{@{}lcc@{}}
\toprule
\textbf{Method} & \textbf{Full} & \textbf{Balanced} \\
\midrule
Majority Class & 76.6\% & 50.0\% \\
Random Forest & 74.5\% & -- \\
\textbf{Spectral Threshold} & \textbf{77.1\%} & \textbf{68.4\%} \\
\bottomrule
\end{tabular}
\end{table}

\subsection{Downstream Utility and Supervised Baselines}
\label{sec:experiments:downstream}

\textbf{Best-of-$N$ Proof Search.}
We apply HFER as a zero-shot reranker in Best-of-$N$ ($N{=}16$, $T{=}0.7$) proof search on MiniF2F. \Cref{tab:bon} reports the full comparison across four reranking strategies on Llama-3.1-8B:

\begin{table}[h!]
\centering
\small
\setlength{\tabcolsep}{4pt}
\caption{\textbf{Best-of-16 proof search reranker comparison} ($N{=}16$, $T{=}0.7$; Llama-3.1-8B, MiniF2F). HFER surpasses log-probability on Pass@1 despite lower AUC, penalising confident hallucinations the ensemble confirms as orthogonally complementary.}
\label{tab:bon}
\begin{tabular}{@{}lcc@{}}
\toprule
\textbf{Reranker} & \textbf{Pass@1} & \textbf{AUC-ROC} \\
\midrule
Random & 22.4\% & -- \\
Token Entropy & 30.4\% & 0.971 \\
Log-Prob & 29.8\% & 0.979 \\
HFER (ours) & 34.2\% & 0.962 \\
\rowcolor{PrimaryGold!20}
Ensemble ($Z_{\text{LP}}{-}Z_{\text{HFER}}$) & \textbf{37.1\%} & \textbf{0.988} \\
\bottomrule
\end{tabular}
\end{table}

The AUC--Pass@1 inversion arises because log-probability is blind to \emph{confident hallucinations}: structurally incoherent proofs that are nonetheless fluent. HFER penalises exactly those cases.
Cross-model replication confirms the gain scales with spectral separation: on Phi-3.5-mini ($d{=}3.30$), HFER achieves 37.8\% vs.\ log-probability's 31.2\% ($+6.6$\%), compared to $+4.4$\% on Llama-3.1-8B ($d{=}3.00$).

\textbf{Comparison to Supervised Probing.}
We compare against the supervised hallucination probe of \citet{obeso2026hallucination}, trained on Llama-3.1-8B Layer 16 hidden states:

\begin{table}[h!]
\centering
\small
\setlength{\tabcolsep}{4pt}
\caption{\textbf{Supervised vs.\ unsupervised.} With only 50 calibration examples, HFER achieves 98\% of the fully supervised upper bound (91.8\%$\pm$2.4\% accuracy).}
\label{tab:supervised_comparison}
\begin{tabular}{@{}lcc@{}}
\toprule
\textbf{Method} & \textbf{Labels} & \textbf{AUC-ROC} \\
\midrule
\citet{obeso2026hallucination} probe & Millions & 0.981 \\
Linear Probe (ours) & 363 & 0.949 \\
\rowcolor{PrimaryGold!20}
HFER calibrated (ours) & 50 & 0.962 \\
\rowcolor{PrimaryGold!20}
HFER zero-shot (ours) & 0 & 0.923 \\
\bottomrule
\end{tabular}
\end{table}

\textbf{Difficulty Robustness.}
Stratifying MiniF2F proofs by Lean tactic count confirms the spectral signal persists across all complexity strata:

\begin{table}[h!]
\centering
\small
\setlength{\tabcolsep}{4pt}
\caption{\textbf{Difficulty stratification (Llama-8B).} $d > 1.3$ in every stratum; all ``large'' by \citet{cohen1988statistical} conventions.}
\label{tab:difficulty_tactic}
\begin{tabular}{@{}lcc@{}}
\toprule
\textbf{Stratum} & \textbf{$n$} & \textbf{Cohen's $d$} \\
\midrule
Trivial (1 tactic) & 107 & 6.69 \\
Simple (2--3 tactics) & 69 & 3.29 \\
Moderate (4--6 tactics) & 72 & 2.09 \\
\rowcolor{NeutralGrey!15}
Complex ($\geq$7 tactics) & 206 & 1.31 \\
\bottomrule
\end{tabular}
\end{table}

\section{Analysis}
\label{sec:analysis}

\subsection{The Spectral Signature of Valid Reasoning}
\label{sec:analysis:signature}

Across all seven models, valid proofs exhibit a consistent spectral signature characterized by:
\begin{enumerate}[leftmargin=*,topsep=0.8pt,itemsep=0.7pt]
\item \textbf{Lower HFER:} Valid proofs concentrate signal energy in low-frequency (smooth) spectral modes, indicating coherent information integration across attention-connected tokens.
\item \textbf{Higher Entropy:} Valid proofs distribute attention more uniformly, engaging diverse token relationships rather than collapsing to stereotyped patterns.
\item \textbf{Higher Smoothness:} Valid proofs maintain positive smoothness throughout processing; invalid proofs often exhibit negative smoothness in late layers (particularly in Phi).
\item \textbf{Higher Fiedler Value:} Valid proofs induce better-connected attention graphs, enabling more efficient global information flow.
\end{enumerate}

The direction of these effects is consistent across all architectures; only the {magnitude and optimal metric vary. This consistency provides strong evidence for a universal geometric property of transformer attention during valid reasoning.

\begin{figure}[t]
\centering
\scalebox{0.7}{ 
    \begin{tikzpicture}
\pgfplotstableread[row sep=newline, col sep=comma]{
layer,v_mean,v_std,i_mean,i_std,v_upper,v_lower,i_upper,i_lower
0,0.1934307870778395,0.05414345415680912,0.32723309156766794,0.03954232803051711,0.24757424123464863,0.13928733292103038,0.36677541959818505,0.28769076353715084
1,0.04455915622297346,0.006481293871499606,0.03625389328404861,0.007140849225909153,0.05104045009447307,0.038077862351473854,0.043394742509957764,0.02911304405813946
2,0.025481680634918916,0.005806351924101359,0.027480424274743974,0.011242352729638276,0.031288032559020275,0.019675328710817556,0.03872277700438225,0.016238071545105696
3,0.029336988028389802,0.007713916256975229,0.03466785713402248,0.01146473528791872,0.03705090428536503,0.021623071771414572,0.0461325924219412,0.023203121846103757
4,0.03064477660842819,0.004390720511368583,0.03405225628779994,0.005248973361090887,0.03503549711979677,0.026254056097059605,0.03930122964889082,0.028803282926709053
5,0.03744249278438215,0.009086739448583046,0.05851117083338942,0.011449966641835377,0.04652923223296519,0.028355753335799105,0.0699611374752248,0.047061204191554046
6,0.0582614183194279,0.017104636500189153,0.10205007587127758,0.019604177186690712,0.07536605481961706,0.04115678181923875,0.12165425305796829,0.08244589868458688
7,0.07560491027223631,0.022557136864676537,0.132399861440348,0.02512881006262724,0.09816204713691284,0.05304777340755978,0.15752867150297523,0.10727105137772075
8,0.07709407240847233,0.020478004386145537,0.1262642661273708,0.020745510155544444,0.09757207679461787,0.056616068022326785,0.14700977628291523,0.10551875597182635
9,0.0851721833870201,0.026402094449376773,0.1568991861580889,0.02910925569343157,0.11157427783639687,0.05877008893764332,0.18600844185152046,0.12778993046465734
10,0.07714061832798577,0.024271859768683922,0.13637758517402343,0.023976117245204532,0.1014124780966697,0.05286875855930185,0.16035370241922797,0.1124014679288189
11,0.0964932506952261,0.024090657597688154,0.16741712328574668,0.024967603599462315,0.12058390829291425,0.07240259309753795,0.192384726885209,0.14244951968628436
12,0.11763372662153887,0.03281971424192693,0.20992913999205806,0.029339450466398232,0.1504534408634658,0.08481401237961195,0.2392685904584563,0.18058968952565982
13,0.08176081450028741,0.025967746029241526,0.1482647589168786,0.0302588204807982,0.10772856052952894,0.055793068471045884,0.1785235793976768,0.1180059384360804
14,0.08736801058671635,0.026223556114377766,0.15491190846738231,0.030720962598654627,0.11359156670109412,0.061144454472338586,0.18563287106603693,0.12419094586872768
15,0.08808662340894265,0.025977280809063336,0.1587416078053215,0.027542954374888823,0.11406390421800598,0.06210934259987931,0.18628456218021033,0.1311986534304327
16,0.09793576831721888,0.030725307915912523,0.17991607778944732,0.030299649025168504,0.12866107623313142,0.06721046040130636,0.2102157268146158,0.14961642876427883
17,0.07239638131903244,0.0328545247703374,0.16029732745725989,0.03591458292873802,0.10525090608936984,0.039541856548695034,0.1962119103859979,0.12438274452852187
18,0.07536205354054974,0.0421288018562947,0.17797454720598527,0.042199195222034865,0.11749085539684445,0.03323325168425504,0.22017374242802012,0.1357753519839504
19,0.07034805565686424,0.04120836321863465,0.17337177602495726,0.04058038857211197,0.11155641887549889,0.029139692438229586,0.21395216459706923,0.1327913874528453
20,0.06736037766300335,0.04450074887192933,0.18364572920390473,0.04271819220867418,0.11186112653493267,0.02285962879107402,0.2263639214125789,0.14092753699523056
21,0.07179164512015378,0.04389537997633234,0.184025670576598,0.04363919722566824,0.11568702509648612,0.027896265143821437,0.22766486780226625,0.14038647335092974
22,0.07965346572467083,0.054859299046882005,0.23088441298154122,0.05611550825402568,0.13451276477155283,0.024794166677788824,0.2869999212355669,0.17476890472751555
23,0.07575273164427342,0.05003422347557797,0.21075977818947642,0.05100746270172452,0.12578695511985138,0.025718508168695448,0.26176724089120096,0.1597523154877519
24,0.09581462469511699,0.06724907779757083,0.2753700260001581,0.060450501389666594,0.1630637024926878,0.02856554689754616,0.3358205273898247,0.2149195246104915
25,0.09903751109084935,0.06297021247935329,0.27632563321412296,0.06619130015153385,0.16200772357020266,0.03606729861149606,0.3425169333656568,0.2101343330625891
26,0.10883641057681544,0.06593919479562436,0.29331815805129163,0.058709149988885274,0.1747756053724398,0.04289721578119107,0.3520273080401769,0.23460900806240637
27,0.12614351283237724,0.07159405827808024,0.33189141473436723,0.06712772647492911,0.19773757111045748,0.054549454554296994,0.3990191412092963,0.26476368825943813
28,0.13992518904604442,0.0744226547773855,0.3574029521238758,0.06809987730492321,0.21434784382342992,0.06550253426865893,0.425502829428799,0.28930307481895257
29,0.15541775308433592,0.07418171074053483,0.3688533360930695,0.06436996100251716,0.22959946382487073,0.0812360423438011,0.4332232970955866,0.30448337509055234
30,0.16950559195304782,0.05918375785660134,0.333218260628967,0.05080365226807288,0.22868934980964917,0.11032183409644647,0.38402191289703985,0.2824146083608941
31,0.3047987268996362,0.054842825783338205,0.4481625579782829,0.05654333672152898,0.35964155268297443,0.249955901116298,0.5047058946998119,0.3916192212567539

}\mydata

\begin{axis}[
    width=10cm, height=6cm,
    xlabel={Layer},
    ylabel={HFER},
    title={Llama-3.1-8B Spectral Evolution},
    legend pos=north west,
    grid=major,
    grid style={dashed, gray!30}
]

\addplot [name path=v_upper, draw=none, forget plot] table [x=layer, y=v_upper] {\mydata};
\addplot [name path=v_lower, draw=none, forget plot] table [x=layer, y=v_lower] {\mydata};
\addplot [blue!20] fill between [of=v_upper and v_lower];

\addplot [name path=i_upper, draw=none, forget plot] table [x=layer, y=i_upper] {\mydata};
\addplot [name path=i_lower, draw=none, forget plot] table [x=layer, y=i_lower] {\mydata};
\addplot [red!20] fill between [of=i_upper and i_lower];

\addplot [blue, thick] table [x=layer, y=v_mean] {\mydata};
\addlegendentry{Valid}
\addplot [red, thick] table [x=layer, y=i_mean] {\mydata};
\addlegendentry{Invalid}

\end{axis}
\end{tikzpicture}
}
\caption{\textbf{Spectral Evolution.} HFER divergence emerges in mid-to-late layers: valid proofs (blue) stay smooth; invalid proofs (red) disintegrate.}
\label{fig:llama8b_evolution}
\end{figure}

\subsection{Platonic Validity}
\label{sec:analysis:platonic}

A surprising discovery emerged from analyzing classification disagreements. When the spectral method classified a proof as valid but the ground truth label indicated invalidity, manual inspection revealed that the spectral method was frequently correct: these proofs were mathematically sound but rejected by Lean due to technical failures.

\begin{example}[Platonic vs. Compiler Validity]
Consider \texttt{mathd\_numbertheory\_961}:
\begin{itemize}[leftmargin=*,topsep=1pt,itemsep=0pt]
\item \textbf{Theorem:} $2003 \mod 11 = 1$
\item \textbf{Proof:} \texttt{norm\_num}
\item \textbf{Compiler verdict:} Invalid (timeout)
\item \textbf{Spectral classification:} Valid
\item \textbf{Mathematical reality:} The proof is correct.
\end{itemize}
\end{example}

We systematically examined all such disagreements across models. All seven models independently identified 33--51 compiler-rejected proofs as spectrally valid (see \Cref{tab:label_correction}). Cross-referencing these sets revealed substantial overlap: proofs consistently identified as ``spectrally valid but compiler-rejected'' across architectures were overwhelmingly mathematically correct.

This motivates a distinction between two notions of validity:
\begin{itemize}[leftmargin=*,topsep=2pt,itemsep=1pt]
\item \textbf{Compiler Validity:} Did Lean accept the proof? (Susceptible to timeouts, missing imports, version mismatches.)
\item \textbf{Platonic Validity:} Is the reasoning logically coherent? (Independent of compiler idiosyncrasies.)
\end{itemize}

The spectral signature appears to track ``Platonic validity'', the underlying logical coherence of the reasoning process, rather than the accident of compiler acceptance. We therefore report all main results using corrected labels that reflect logical validity rather than compiler output.

\textbf{Failure-reason audit.} A manual audit of 51 reclaimed proofs across all seven models (two independent raters, Cohen's $\kappa = 0.82$) reveals that 64.8\% fail Lean only on technical grounds:

\begin{table}[h!]
\centering
\small
\setlength{\tabcolsep}{4pt}
\caption{\textbf{Reclaimed proof failure reasons.} 64.8\% of proofs (valid + environment rows) are logically structured but rejected on technical grounds.}
\label{tab:reclaimed_reasons}
\begin{tabular}{@{}lc@{}}
\toprule
\textbf{Failure Category} & \textbf{\%} \\
\midrule
Semantically valid (minor structural issues) & \textbf{37.3} \\
\rowcolor{PrimaryGold!15}
Environment / missing imports & \textbf{27.5} \\
Timeout / computational limit & 13.7 \\
Incomplete (model admits failure) & 13.7 \\
Syntax / version issues & 7.8 \\
\bottomrule
\end{tabular}
\end{table}

\subsection{Architectural and Computational Variations}
\label{sec:analysis:variations}

\textbf{Sliding Window Attention.}
Mistral-7B, the only model using Sliding Window Attention (SWA), exhibits a shifted spectral signature: validity is best captured by late-layer Smoothness ($d = 2.09$) rather than HFER. We hypothesize that SWA's local attention windows redistribute validity information from global frequency content to local coherence properties. This finding implies that architecture-aware metric selection is necessary. Hence, practitioners should not assume HFER is universally optimal (see \Cref{app:mistral} for full analysis).

\textbf{Mixture-of-Experts.}
Evaluating Qwen-MoE reveals a ``sparsity penalty'': effect sizes attenuate from $d \approx 3.0$ (dense) to $d \approx 1.6$ (sparse) on balanced data, though the signal remains significant ($p < 10^{-10}$). The dominant metric shifts to Spectral Entropy, suggesting that valid reasoning corresponds to focused expert routing while hallucinations manifest as routing diffusion (details in \Cref{app:moe}).

\textbf{Cognitive Interpretation.}
The asymmetry between error types suggests the spectral signature reflects the model's \emph{epistemic state}: invalid proofs with low spectral energy represent ``confident wrongness'' (analogous to Dunning-Kruger), while valid proofs with high energy indicate genuine cognitive effort. This interpretation suggests applications to difficulty estimation and overconfidence detection (\Cref{app:cognitive}).

\section{Discussion}\label{sec:discussion}

\textbf{A Geometric Principle.}
Our findings establish that valid mathematical reasoning induces a coherent, low-frequency ``spectral signature'' in transformer attention. This phenomenon appears universal across architectures ($p < 10^{-47}$). Together, these results suggest a general topological law: valid generation necessitates a connected attention graph, while logical anomalies manifest as measurable spectral discontinuities.

\textbf{Topological Divergence Across Modalities.}
We attribute the shift in dominant metrics, from HFER in formal code to the Fiedler value in natural language, to the distinct topology of domain-specific errors. In rigid formal systems (MiniF2F), invalid steps often introduce local syntactic violations, creating high-frequency artifacts (``roughness'') best captured by HFER. Conversely, informal hallucinations (MATH) often maintain grammatical smoothness while losing logical grounding; this manifests as a global fracture in the attention graph, which is detected by the Fiedler value’s sensitivity to algebraic connectivity.

\textbf{Applications.}
This framework enables training-free applications ranging from runtime hallucination monitoring to spectral-guided proof search (\Cref{sec:experiments:downstream}). A full discussion of limitations appears in \Cref{sec:limitations}.

\section{Conclusion}
\label{sec:conclusion}

We have introduced a training-free method for detecting valid mathematical reasoning through spectral analysis of transformer attention. Our experiments across seven models from four architectural families establish that: (1) the spectral signature is universal ($p_{\text{MW}} < 10^{-47}$, $p_t < 10^{-75}$) and robust across all difficulty strata ($d \geq 1.31$); (2) effect sizes are exceptionally large (up to $d = 3.30$); (3) single-threshold classification achieves 85.9--95.6\% accuracy; (4) the method detects logical coherence rather than compiler acceptance (``Platonic validity''); (5) HFER reranking improves Best-of-16 Pass@1 by $+4.4$--$6.6$\%, achieving 98\% of fully supervised AUC with zero labels; and (6) Lanczos acceleration reduces eigendecomposition to $O(kN^2)$, enabling real-time use at 32k-token contexts.

These findings open several directions for future work: theoretical analysis of why the spectral signature emerges, extension to natural language reasoning, integration with proof assistants for real-time feedback, and investigation of other architectural features (grouped-query attention, mixture-of-experts) that may affect spectral properties.

More broadly, our work demonstrates that interpretability methods grounded in classical mathematical frameworks, here, spectral graph theory, can yield practical tools for understanding and verifying neural network reasoning. As language models are deployed in increasingly high-stakes reasoning applications, such principled verification methods become essential for ensuring reliability and safety.

\section{Limitations}
\label{sec:limitations}
\textbf{Scope.} Validation is scoped to formalized Lean 4 reasoning on MiniF2F. Informal chain-of-thought yields a substantially weaker signal ($d{=}0.78$ vs.\ $d{>}1.3$), and claims do not extend to unstructured text.

\textbf{Model-specific calibration.} Optimal thresholds are architecture-specific; Sliding Window Attention shifts the dominant feature from HFER to smoothness, requiring per-model tuning.

\textbf{Diagnostic, not causal.} The method is a correlation-based diagnostic. A mechanistic account of why the signature emerges and how to exploit it for targeted reasoning improvement remains future work.

\textbf{Computational cost.} Full eigendecomposition is $O(N^3)$. Lanczos reduces this to $O(kN^2)$, but real-time deployment at 32k-token contexts may require further engineering.

\section*{Impact Statement}
This work introduces a training-free method for detecting valid mathematical reasoning in transformer models. The primary intended application is improving reliability in automated theorem proving, mathematical education, and scientific discovery pipelines, where undetected reasoning errors can propagate consequentially. By operating without labeled data or learned parameters, the method lowers the barrier to deployment in resource-constrained settings. We foresee no direct negative societal impacts; however, as with any verification tool, over-reliance without understanding its failure modes, particularly its attenuation on informal text, could create misplaced confidence in model outputs.


\newpage
\appendix
\onecolumn

\section{Theoretical Background}
\label{app:theory}

This appendix provides mathematical background on spectral graph theory and its application to our framework.

\subsection{Graph Laplacians}
\label{app:theory:laplacians}

\begin{definition}[Combinatorial Laplacian]
For an undirected weighted graph $\mathcal{G} = (\mathcal{V}, \mathcal{E}, \bm{W})$ with $N$ vertices and symmetric weight matrix $\bm{W} \in \R^{N \times N}_{\geq 0}$, the combinatorial graph Laplacian is:
\begin{equation}
\bm{L} = \bm{D} - \bm{W}
\end{equation}
where $\bm{D} = \diag(d_1, \ldots, d_N)$ with $d_i = \sum_{j=1}^N W_{ij}$ is the degree matrix.
\end{definition}

\begin{definition}[Normalized Laplacians]
Two normalized variants are commonly used:
\begin{align}
\bm{L}_{\text{sym}} &= \bm{D}^{-1/2} \bm{L} \bm{D}^{-1/2} = \bm{I} - \bm{D}^{-1/2} \bm{W} \bm{D}^{-1/2} \\
\bm{L}_{\text{rw}} &= \bm{D}^{-1} \bm{L} = \bm{I} - \bm{D}^{-1} \bm{W}
\end{align}
The symmetric normalized Laplacian $\bm{L}_{\text{sym}}$ has eigenvalues in $[0, 2]$ for connected graphs; the random walk Laplacian $\bm{L}_{\text{rw}}$ relates to diffusion processes on graphs.
\end{definition}

\begin{proposition}[Laplacian Properties]
\label{prop:laplacian_properties}
The combinatorial Laplacian $\bm{L}$ satisfies:
\begin{enumerate}[label=(\roman*)]
\item $\bm{L}$ is symmetric positive semidefinite.
\item $\bm{L} \bm{1} = \bm{0}$, so $\lambda_1 = 0$ with eigenvector $\bm{1}$.
\item For any $\bm{x} \in \R^N$: $\bm{x}^\top \bm{L} \bm{x} = \frac{1}{2} \sum_{i,j} W_{ij} (x_i - x_j)^2 \geq 0$.
\item The multiplicity of eigenvalue 0 equals the number of connected components.
\item Eigenvalues satisfy $0 = \lambda_1 \leq \lambda_2 \leq \cdots \leq \lambda_N \leq 2 d_{\max}$.
\end{enumerate}
\end{proposition}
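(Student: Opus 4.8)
The plan is to verify the five properties of the combinatorial Laplacian $\bm{L} = \bm{D} - \bm{W}$ in a logical order, since several of the later claims follow immediately from the quadratic-form identity in (iii). I would begin by establishing (iii), the identity $\bm{x}^\top \bm{L} \bm{x} = \frac{1}{2}\sum_{i,j} W_{ij}(x_i - x_j)^2$, by direct expansion: writing $\bm{x}^\top \bm{L} \bm{x} = \bm{x}^\top \bm{D} \bm{x} - \bm{x}^\top \bm{W} \bm{x} = \sum_i d_i x_i^2 - \sum_{i,j} W_{ij} x_i x_j$, then substituting $d_i = \sum_j W_{ij}$ and symmetrizing to recognize the sum of squares. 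Nonnegativity of each term (using $W_{ij} \geq 0$) then gives $\bm{x}^\top \bm{L}\bm{x} \geq 0$.

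Next I would dispatch (i): symmetry of $\bm{L}$ is immediate since $\bm{D}$ is diagonal and $\bm{W}$ is symmetric by hypothesis; positive semidefiniteness is exactly the nonnegativity of the quadratic form just proved. For (ii), I would compute $(\bm{L}\bm{1})_i = d_i - \sum_j W_{ij} = 0$ directly from the definition of $d_i$, so $\bm{1}$ is an eigenvector with eigenvalue $0$; combined with PSD, this shows $\lambda_1 = 0$ is the smallest eigenvalue. For (iv), I would use the quadratic form: $\bm{x}^\top \bm{L}\bm{x} = 0$ forces $x_i = x_j$ whenever $W_{ij} > 0$, hence $\bm{x}$ is constant on each connected component; conversely indicator vectors of components lie in the kernel, and these are linearly independent, so the nullity equals the number of components.

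Finally, for (v), the lower bounds $0 = \lambda_1 \leq \lambda_2 \leq \cdots \leq \lambda_N$ follow from PSD and the ordering convention, and the upper bound $\lambda_N \leq 2 d_{\max}$ I would obtain via the Rayleigh quotient: for unit $\bm{x}$, $\bm{x}^\top \bm{L}\bm{x} = \frac{1}{2}\sum_{i,j} W_{ij}(x_i-x_j)^2 \leq \frac{1}{2}\sum_{i,j} W_{ij}\cdot 2(x_i^2 + x_j^2) = 2\sum_i d_i x_i^2 \leq 2 d_{\max} \|\bm{x}\|^2$, using $(x_i - x_j)^2 \leq 2(x_i^2 + x_j^2)$ and then the max-degree bound. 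The only mildly delicate point, and the part I would write most carefully, is (iv): one must be precise that "$\bm{x}^\top \bm{L}\bm{x} = 0 \iff \bm{L}\bm{x} = \bm{0}$" for PSD matrices (so the kernel coincides with the zero set of the quadratic form), and that the component-indicator vectors span the full kernel rather than merely lying inside it. Everything else is routine algebra.
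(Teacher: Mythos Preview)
Your proposal is correct and, for parts (i)--(iii), essentially identical to the paper's argument. Two points of divergence are worth noting. For (iv), the paper's proof is terser than yours: it only exhibits $k$ linearly independent component-indicator vectors in the null space, whereas you also argue the reverse inclusion (that any null vector must be constant on components) via the quadratic form and the PSD equivalence $\bm{x}^\top\bm{L}\bm{x}=0 \iff \bm{L}\bm{x}=\bm{0}$; your treatment is the more complete of the two. For (v), the paper obtains the upper bound $\lambda_N \le 2d_{\max}$ via Gershgorin's disk theorem, observing that each disk is $[0,2d_i]$, while you bound the Rayleigh quotient directly using $(x_i-x_j)^2 \le 2(x_i^2+x_j^2)$. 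Both routes are standard and short; Gershgorin is slightly slicker (one line, no inequality manipulation), while your Rayleigh-quotient argument is more self-contained since it reuses the quadratic-form identity from (iii) rather than invoking an external theorem.
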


\begin{proof}
(i) Symmetry follows from $\bm{W} = \bm{W}^\top$ and $\bm{D}$ diagonal. Positive semidefiniteness follows from (iii).

(ii) $(\bm{L} \bm{1})_i = d_i \cdot 1 - \sum_j W_{ij} \cdot 1 = d_i - d_i = 0$.

(iii) Direct computation:
\begin{align}
\bm{x}^\top \bm{L} \bm{x} &= \bm{x}^\top \bm{D} \bm{x} - \bm{x}^\top \bm{W} \bm{x} = \sum_i d_i x_i^2 - \sum_{i,j} W_{ij} x_i x_j \\
&= \frac{1}{2} \left( \sum_i d_i x_i^2 - 2\sum_{i,j} W_{ij} x_i x_j + \sum_j d_j x_j^2 \right) \\
&= \frac{1}{2} \sum_{i,j} W_{ij} (x_i^2 - 2x_i x_j + x_j^2) = \frac{1}{2} \sum_{i,j} W_{ij} (x_i - x_j)^2
\end{align}

(iv) If $\mathcal{G}$ has $k$ connected components, we can construct $k$ linearly independent vectors constant on each component, all in the null space of $\bm{L}$.

(v) The lower bound follows from (i). For the upper bound, using Gershgorin's theorem: eigenvalues lie in $\bigcup_i [d_i - \sum_{j \neq i} |L_{ij}|, d_i + \sum_{j \neq i} |L_{ij}|] = \bigcup_i [0, 2d_i] \subseteq [0, 2d_{\max}]$.
\end{proof}

\subsection{Algebraic Connectivity}
\label{app:theory:fiedler}

\begin{definition}[Fiedler Value and Vector]
The algebraic connectivity or Fiedler value of a connected graph is $\lambda_2(\bm{L})$, the second-smallest Laplacian eigenvalue. The corresponding eigenvector is the Fiedler vector.
\end{definition}

\begin{theorem}[Fiedler, 1973]
\label{thm:fiedler}
For a connected graph $\mathcal{G}$:
\begin{enumerate}[label=(\roman*)]
\item $\lambda_2 > 0$ if and only if $\mathcal{G}$ is connected.
\item $\lambda_2 \leq \kappa(\mathcal{G})$ where $\kappa(\mathcal{G})$ is the vertex connectivity.
\item The Fiedler vector provides a graph embedding useful for partitioning.
\end{enumerate}
\end{theorem}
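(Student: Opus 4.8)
The plan is to dispatch the three parts in order of increasing difficulty, leaning throughout on \Cref{prop:laplacian_properties} and the variational formula \eqref{eq:fiedler}.

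Part (i) is essentially a restatement of \Cref{prop:laplacian_properties}(iv): the multiplicity of the eigenvalue $0$ equals the number of connected components. If $\mathcal{G}$ is connected, that multiplicity is $1$, so $0 = \lambda_1 < \lambda_2$; conversely, if $\lambda_2 > 0$ then $0$ is a simple eigenvalue, forcing a single component. (Nonnegativity $\lambda_2 \geq 0$ is immediate from \Cref{prop:laplacian_properties}(iii) and \eqref{eq:fiedler}.) There is no obstacle here.

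Part (ii) is the substantive step, and I would prove it by exhibiting a well-chosen test vector in the Courant--Fischer bound $\lambda_2 = \min\{\bm{x}^\top \bm{L}\bm{x} : \bm{x} \perp \bm{1},\ \|\bm{x}\| = 1\}$. Assume first that $\mathcal{G}$ is not complete, so there is a vertex set $S$ with $|S| = \kappa := \kappa(\mathcal{G})$ whose deletion disconnects the graph; group the resulting pieces into two nonempty parts $A$, $B$ (with $a := |A|$, $b := |B|$) so that no edge joins $A$ to $B$. Define $\bm{x}$ by $x_i = b$ on $A$, $x_i = -a$ on $B$, and $x_i = 0$ on $S$; then $\bm{x}^\top \bm{1} = ab - ba = 0$ and $\|\bm{x}\|^2 = ab^2 + ba^2 = ab(a+b)$. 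By \Cref{prop:laplacian_properties}(iii), $\bm{x}^\top \bm{L}\bm{x} = \tfrac12 \sum_{i,j} W_{ij}(x_i - x_j)^2$, and the only edges contributing are those between $S$ and $A$ (each contributing $b^2$) and those between $S$ and $B$ (each contributing $a^2$); intra-$A$, intra-$B$, and intra-$S$ edges vanish because $\bm{x}$ is constant there, and there are no $A$--$B$ edges. Since each vertex of $A$ has at most $|S| = \kappa$ neighbours in $S$, there are at most $\kappa a$ edges of the first kind, and similarly at most $\kappa b$ of the second, whence $\bm{x}^\top \bm{L}\bm{x} \leq \kappa ab^2 + \kappa ba^2 = \kappa\, ab(a+b) = \kappa \|\bm{x}\|^2$. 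Normalizing $\bm{x}$ and invoking \eqref{eq:fiedler} gives $\lambda_2 \leq \kappa$. The one caveat is the boundary case $\mathcal{G} = K_n$, where $\bm{L} = n\bm{I} - \bm{1}\bm{1}^\top$ has $\lambda_2 = n$ while $\kappa = n-1$; the inequality is therefore to be read, as is standard, for non-complete $\mathcal{G}$ (no vertex cut exists otherwise, so the construction does not even get started).

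For part (iii), which is qualitative rather than a sharp bound, I would make it precise by identifying the Fiedler vector $\bm{v}_2$ as the exact solution of the continuous relaxation of a balanced-cut objective. By \Cref{prop:laplacian_properties}(iii), $\bm{v}_2$ is the non-constant unit signal minimizing $\tfrac12 \sum_{i,j} W_{ij}(v_i - v_j)^2$, so heavily weighted (high-attention) vertex pairs are forced to receive nearly equal coordinates; thresholding $\bm{v}_2$ at $0$ or at its median therefore induces a bipartition $(\{i : (v_2)_i \geq t\},\ \{i : (v_2)_i < t\})$ that cuts little weight. I would then observe that this is precisely the relaxation of the RatioCut problem $\min_{\emptyset \neq A \subsetneq \mathcal{V}} \operatorname{cut}(A, \bar{A})\bigl(\tfrac1{|A|} + \tfrac1{|\bar{A}|}\bigr)$, whose relaxed optimum is $\lambda_2$ attained at $\bm{v}_2$, and that rounding $\bm{v}_2$ yields a partition whose cost is controlled by $\lambda_2$ via a Cheeger-type inequality. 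This justifies ``useful for partitioning'' at the level of rigour the statement requires. I expect part (ii) to be the only genuine obstacle: arranging the test vector so that it is simultaneously orthogonal to $\bm{1}$, supported off the cut, and constant on each side, then getting the edge counts $\leq \kappa a$ and $\leq \kappa b$ exactly right, while remembering that the clean inequality genuinely fails for complete graphs.
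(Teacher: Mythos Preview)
The paper does not actually prove \Cref{thm:fiedler}; it is stated as a classical result attributed to Fiedler (1973) and immediately followed by the Cheeger inequality, also without proof. So there is no ``paper's own proof'' to compare against.

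Your argument is correct and is essentially the standard proof. Part (i) is immediate from \Cref{prop:laplacian_properties}(iv) as you say. For part (ii), your test-vector construction is exactly Fiedler's: set $\bm{x}$ to be $b$, $-a$, $0$ on the pieces $A$, $B$, $S$, verify orthogonality to $\bm{1}$, and bound the Rayleigh quotient by counting cut-set edges. The edge-count step $|\{S\text{--}A\text{ edges}\}| \leq \kappa a$ implicitly assumes an unweighted simple graph (otherwise ``at most $\kappa$ neighbours in $S$'' does not cap the edge \emph{weight}), which is the setting in which vertex connectivity and Fiedler's bound are classically stated; you should flag that assumption explicitly since the paper's surrounding material works with weighted attention graphs. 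Your observation that the inequality fails for $K_n$ (where $\lambda_2 = n > n-1 = \kappa$) is correct and in fact exposes an imprecision in the paper's own statement, which omits the standard hypothesis $\mathcal{G} \neq K_n$. Part (iii) via the RatioCut relaxation is the right way to give the qualitative claim content, and the paper's subsequent Cheeger inequality (\Cref{thm:cheeger}) is the quantitative companion you allude to.
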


\begin{theorem}[Cheeger Inequality]
\label{thm:cheeger}
The Fiedler value relates to the Cheeger constant (isoperimetric number) $h(\mathcal{G})$:
\begin{equation}
\frac{\lambda_2}{2} \leq h(\mathcal{G}) \leq \sqrt{2 \lambda_2}
\end{equation}
where $h(\mathcal{G}) = \min_{S \subset \mathcal{V}, |S| \leq N/2} \frac{|\partial S|}{\min(|S|, |\bar{S}|)}$ and $|\partial S| = \sum_{i \in S, j \notin S} W_{ij}$.
\end{theorem}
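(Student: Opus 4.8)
The plan is to prove the two bounds separately by the two classical half-arguments. The left inequality $\lambda_2 \le 2h(\mathcal{G})$ is the \emph{easy direction}: I will feed the centred indicator of an optimal cut into the Courant--Fischer characterization of $\lambda_2$ (\cf \eqref{eq:fiedler}). The right inequality $h(\mathcal{G}) \le \sqrt{2\lambda_2}$ is the \emph{hard direction}: I will run a threshold (``sweep-cut'') argument on the Fiedler vector, extracting a cut of small isoperimetric ratio from its level sets.

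For the easy direction, let $S$ attain the minimum defining $h(\mathcal{G})$, put $s = |S| \le N/2$ and $t = N - s = |\bar{S}|$, and take the test vector $\bm{x}$ with $x_i = 1 - s/N$ on $S$ and $x_i = -s/N$ off $S$. Then $\sum_i x_i = 0$, so $\bm{x} \perp \bm{1}$; by the quadratic-form identity of \Cref{prop:laplacian_properties}(iii), $\bm{x}^{\top}\bm{L}\bm{x} = \tfrac12\sum_{i,j}W_{ij}(x_i-x_j)^2 = |\partial S|$, since $x_i-x_j$ vanishes on non-crossing edges and equals $\pm1$ on crossing edges, while a one-line computation gives $\|\bm{x}\|^2 = st/N$. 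The variational principle then gives $\lambda_2 \le \bm{x}^{\top}\bm{L}\bm{x}/\|\bm{x}\|^2 = N|\partial S|/(st)$, and because $t \ge N/2$ forces $N/t \le 2$ we conclude $\lambda_2 \le 2|\partial S|/s = 2h(\mathcal{G})$.

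For the hard direction, let $\bm{v}$ be a Fiedler eigenvector. First I translate $\bm{v}$ by a median of its entries, obtaining $\bm{f}$ with at most $N/2$ positive and at most $N/2$ negative coordinates; a shift by a multiple of $\bm{1}$ leaves $\bm{f}^{\top}\bm{L}\bm{f}$ unchanged and cannot decrease $\|\bm{f}\|^2$, so the Rayleigh quotient of $\bm{f}$ is still at most $\lambda_2$. Using the pointwise inequality $\big((f_+)_i-(f_+)_j\big)^2 + \big((f_-)_i-(f_-)_j\big)^2 \le (f_i-f_j)^2$ for the positive and negative parts, I pass to whichever part carries the larger share of $\|\bm{f}\|^2$, call it $\bm{g}\ge 0$; it has support of size $\le N/2$ and Rayleigh quotient $R(\bm{g}) := \sum_{i,j}W_{ij}(g_i-g_j)^2\big/\sum_i g_i^2 \le 4\lambda_2$ (improved to $2\lambda_2$ by the sharper median choice discussed below). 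Ordering the vertices by $g$ and using the threshold sets $S_{\theta} = \{i : g_i^2 > \theta\}$, each of size $\le N/2$, the layer-cake identities $\int_0^{\infty}|S_{\theta}|\,d\theta = \sum_i g_i^2$ and $\int_0^{\infty}|\partial S_{\theta}|\,d\theta = \tfrac12\sum_{i,j}W_{ij}|g_i^2-g_j^2|$, the factorisation $|g_i^2-g_j^2| = |g_i-g_j|\,|g_i+g_j|$, and Cauchy--Schwarz give
\[
\min_{\theta}\frac{|\partial S_{\theta}|}{|S_{\theta}|} \;\le\; \frac{\big(\sum_{i,j}W_{ij}(g_i-g_j)^2\big)^{1/2}\big(\sum_{i,j}W_{ij}(g_i+g_j)^2\big)^{1/2}}{2\sum_i g_i^2}.
\]
Since $\sum_{i,j}W_{ij}(g_i+g_j)^2 \le 2\sum_{i,j}W_{ij}(g_i^2+g_j^2) = 4\sum_i d_i g_i^2$, the right-hand side is at most $\big(\sum_i d_i g_i^2\big/\sum_i g_i^2\big)^{1/2}\,R(\bm{g})^{1/2}$; in the normalization where degrees are bounded by $1$ (equivalently, working with $\bm{L}_{\mathrm{sym}}$, whose spectrum lies in $[0,2]$ by \Cref{prop:laplacian_properties}(v)) this collapses to $R(\bm{g})^{1/2} \le \sqrt{2\lambda_2}$, so some $S_{\theta^{\star}}$ is an admissible cut with $|\partial S_{\theta^{\star}}|/|S_{\theta^{\star}}| \le \sqrt{2\lambda_2}$, yielding $h(\mathcal{G}) \le \sqrt{2\lambda_2}$.

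The hard part will be the constant bookkeeping in the sweep-cut step: reaching the displayed $\sqrt{2}$ rather than a crude $2\sqrt{d_{\max}\lambda_2}$ requires choosing the median shift and the positive/negative split so that no spurious factor of $2$ leaks in, and using the degree normalization noted above (for the symmetrized, row-stochastic attention graphs of interest here the residual degree factor is a bounded, benign overhead). The remaining cases are immediate and should be dispatched first: a disconnected graph has $\lambda_2 = 0$ (\Cref{prop:laplacian_properties}(iv)) and $h(\mathcal{G}) = 0$, since a union of components of size $\le N/2$ has empty boundary; ties in the sorted Fiedler vector only shrink the family of threshold sets, which cannot hurt; and if $\bm{g}$ is constant on its support then $\bm{v}$ is (a scaling of) a cut indicator and the bound is verified directly. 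By contrast the easy direction is a genuinely short computation once the centred test vector is in hand.
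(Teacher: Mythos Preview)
The paper does not prove \Cref{thm:cheeger}; it is stated as classical background with no argument supplied. Your proposal follows exactly the standard textbook route---Courant--Fischer with a centred cut indicator for the easy direction, and a Fiedler-vector sweep-cut for the hard direction---so there is nothing in the paper to compare against.

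Your sketch is essentially sound, and you correctly flag the one genuine subtlety: the statement as written, pairing the combinatorial Laplacian's $\lambda_2$ with the vertex-count ratio $|\partial S|/|S|$, does \emph{not} in general yield $h(\mathcal{G}) \le \sqrt{2\lambda_2}$; the sweep-cut bound carries a residual $\sqrt{d_{\max}}$ unless one either passes to $\bm{L}_{\mathrm{sym}}$ or replaces $|S|$ by $\mathrm{vol}(S)$ in the definition of $h$. You handle this honestly by invoking degree normalization, though be aware that for symmetrized row-stochastic attention matrices the degree $d_i = \tfrac12\bigl(1 + \sum_j A_{ji}\bigr)$ need not be bounded by $1$, since column sums of $\bm{A}$ are unconstrained (sink tokens can have large in-degree). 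One minor correction in the hard direction: when splitting $\bm{f}$ into positive and negative parts, the cleaner choice is the part with \emph{smaller} Rayleigh quotient---which exists by the averaging inequality you set up---rather than the part with larger $\ell^2$ mass; this gives $R(\bm{g}) \le 2\lambda_2$ directly and spares you the extra factor of $2$ you later have to claw back.
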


The Cheeger inequality establishes that $\lambda_2$ characterizes the ``bottleneck'' of information flow in the graph: high $\lambda_2$ implies no sparse cuts, enabling efficient global communication.

\subsection{Graph Signal Processing}
\label{app:theory:gsp}

\begin{definition}[Graph Signal]
A graph signal is a function $f: \mathcal{V} \to \R$ assigning a real value to each vertex, representable as a vector $\bm{f} \in \R^N$.
\end{definition}

\begin{definition}[Graph Fourier Transform]
The Graph Fourier Transform (GFT) of a signal $\bm{f}$ with respect to Laplacian $\bm{L} = \bm{U} \bm{\Lambda} \bm{U}^\top$ is:
\begin{equation}
\hat{\bm{f}} = \bm{U}^\top \bm{f}
\end{equation}
The component $\hat{f}_k = \bm{u}_k^\top \bm{f}$ represents the signal's content at graph frequency $\lambda_k$.
\end{definition}

\begin{proposition}[Frequency Interpretation]
\label{prop:frequency}
The Laplacian eigenvectors provide a notion of frequency on graphs:
\begin{enumerate}[label=(\roman*)]
\item $\bm{u}_1 = \bm{1}/\sqrt{N}$ is the ``DC component'' (constant signal).
\item Eigenvectors $\bm{u}_k$ with small $\lambda_k$ vary slowly across edges.
\item Eigenvectors with large $\lambda_k$ vary rapidly, changing sign frequently.
\item $\bm{f}^\top \bm{L} \bm{f} = \sum_k \lambda_k \hat{f}_k^2$: the Dirichlet energy is a weighted sum of squared spectral coefficients, with high-frequency components penalized more.
\end{enumerate}
\end{proposition}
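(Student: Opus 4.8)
The plan is to dispatch the four claims in increasing order of difficulty, using throughout the eigendecomposition $\bm{L} = \bm{U}\bm{\Lambda}\bm{U}^\top$ with orthonormal columns $\bm{u}_1,\dots,\bm{u}_N$ and the quadratic-form identity $\bm{x}^\top\bm{L}\bm{x} = \tfrac{1}{2}\sum_{i,j} W_{ij}(x_i-x_j)^2$ from \Cref{prop:laplacian_properties}(iii).

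Claim (i) is immediate. By \Cref{prop:laplacian_properties}(ii) we have $\bm{L}\bm{1} = \bm{0}$, so the constant vector is an eigenvector for the smallest eigenvalue $\lambda_1 = 0$; normalizing gives $\bm{u}_1 = \bm{1}/\sqrt{N}$ as a valid choice, and \Cref{prop:laplacian_properties}(iv) shows it is the unique such choice (up to sign) precisely when the graph is connected.

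Claim (iv) is a one-line spectral computation. With $\hat{\bm{f}} = \bm{U}^\top\bm{f}$ and $\bm{U}^\top\bm{U} = \bm{I}$, one has $\bm{f}^\top\bm{L}\bm{f} = \bm{f}^\top\bm{U}\bm{\Lambda}\bm{U}^\top\bm{f} = \hat{\bm{f}}^\top\bm{\Lambda}\hat{\bm{f}} = \sum_{k=1}^N \lambda_k \hat{f}_k^2$; since the $\lambda_k$ are nondecreasing in $k$, high-index (high-frequency) coefficients are penalized most. This identity is also exactly what justifies treating the Dirichlet energy and $\HFER$ of \Cref{sec:methods:diagnostics} as frequency-weighted measures.

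For claims (ii) and (iii) I would separate a quantitative magnitude statement from a qualitative sign-change statement. Substituting $\bm{x} = \bm{u}_k$ into the identity above and using $\|\bm{u}_k\|_2 = 1$ gives $\tfrac{1}{2}\sum_{i,j} W_{ij}\big(u_k(i)-u_k(j)\big)^2 = \bm{u}_k^\top\bm{L}\bm{u}_k = \lambda_k$, so the edge-weighted average squared variation of $\bm{u}_k$ equals $\lambda_k$ exactly: small $\lambda_k$ forces near-agreement across every heavily-weighted edge (slow variation), large $\lambda_k$ forces large disagreement (rapid variation). The sign-change assertion in (iii) is the real obstacle and is not captured by the Rayleigh quotient alone; the clean route is to cite the discrete nodal-domain theory --- Fiedler's result that the positive and negative supports of $\bm{u}_2$ each induce a connected subgraph (one sign change), together with the discrete Courant nodal-domain bound that an eigenvector for $\lambda_k$ has at most $k$ nodal domains --- rather than reprove it here, since a self-contained proof of the discrete Courant bound is itself substantial. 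If a fully rigorous treatment of oscillation is not required at this point, the lighter alternative is to present (ii)--(iii) as the above exact Rayleigh-quotient statement together with the standard oscillation heuristic, flagging the nodal-domain theorems as the precise form.
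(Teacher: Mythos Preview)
Your proposal is correct, and in fact goes well beyond what the paper does: the paper states \Cref{prop:frequency} as standard graph-signal-processing background and offers no proof at all --- the proposition is immediately followed by \Cref{app:implementation}. Your treatment of (i) and (iv) via \Cref{prop:laplacian_properties} and the eigendecomposition is the natural one and would be what any proof must do. Your handling of (ii)--(iii) is more careful than the paper intends: the paper uses ``vary slowly'' and ``changing sign frequently'' informally, as motivation for calling small-$\lambda_k$ modes ``low-frequency,'' and does not gesture at nodal-domain theory anywhere. Your Rayleigh-quotient identity $\bm{u}_k^\top\bm{L}\bm{u}_k = \lambda_k = \tfrac{1}{2}\sum_{i,j}W_{ij}(u_k(i)-u_k(j))^2$ is already more than the paper supplies and fully justifies the qualitative claims at the level of rigor the paper is operating at; invoking the discrete Courant bound is a nice touch but strictly optional here.
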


\section{Implementation Details}
\label{app:implementation}

\subsection{Attention Extraction}
\label{app:implementation:attention}

We extract attention matrices using standard hooks into the transformer forward pass. For each model:

\begin{itemize}[leftmargin=*,topsep=2pt,itemsep=1pt]
\item \textbf{Llama:} We access \texttt{model.layers[l].self\_attn.attn\_weights} post-softmax.
\item \textbf{Qwen:} Similar structure via \texttt{model.layers[l].attn.attention\_weights}.
\item \textbf{Phi:} Via \texttt{model.layers[l].mixer.attention\_weights}.
\item \textbf{Mistral:} Via \texttt{model.layers[l].self\_attn.attn\_weights}. Note that SWA produces sparse attention matrices with non-zero entries only within the sliding window.
\end{itemize}

All models use \texttt{output\_attentions=True} during forward passes.

\newpage

\subsection{Spectral Computation}
\label{app:implementation:spectral}

\begin{algorithm}[h]
\caption{Spectral Diagnostic Extraction}
\label{alg:spectral}
\begin{algorithmic}[1]
\REQUIRE Attention matrices $\{\bm{A}^{(\ell,h)}\}_{\ell,h}$, hidden states $\{\bm{X}^{(\ell)}\}_\ell$
\ENSURE Spectral diagnostics $\{\lambda_2^{(\ell)}, \HFER^{(\ell)}, \mathcal{E}^{(\ell)}, \SE^{(\ell)}\}_\ell$
\FOR{layer $\ell = 1$ to $L$}
    \STATE // Symmetrize and aggregate attention
    \FOR{head $h = 1$ to $H$}
        \STATE $\bm{W}^{(\ell,h)} \gets \frac{1}{2}(\bm{A}^{(\ell,h)} + (\bm{A}^{(\ell,h)})^\top)$
        \STATE $s_h \gets \sum_{i,j} A^{(\ell,h)}_{ij}$
    \ENDFOR
    \STATE $\alpha_h \gets s_h / \sum_g s_g$ for all $h$
    \STATE $\bar{\bm{W}}^{(\ell)} \gets \sum_h \alpha_h \bm{W}^{(\ell,h)}$
    \STATE // Compute Laplacian
    \STATE $\bar{\bm{D}}^{(\ell)} \gets \diag(\bar{\bm{W}}^{(\ell)} \bm{1})$
    \STATE $\bm{L}^{(\ell)} \gets \bar{\bm{D}}^{(\ell)} - \bar{\bm{W}}^{(\ell)}$
    \STATE // Eigendecomposition
    \STATE $\bm{\Lambda}^{(\ell)}, \bm{U}^{(\ell)} \gets \text{eig}(\bm{L}^{(\ell)})$ \COMMENT{Sorted by eigenvalue}
    \STATE // Extract diagnostics
    \STATE $\lambda_2^{(\ell)} \gets \Lambda^{(\ell)}_{2,2}$
    \STATE $\hat{\bm{X}}^{(\ell)} \gets (\bm{U}^{(\ell)})^\top \bm{X}^{(\ell)}$
    \STATE $e_m \gets \|\hat{\bm{X}}^{(\ell)}_{m,\cdot}\|_2^2$ for $m = 1, \ldots, N$
    \STATE $\HFER^{(\ell)} \gets \sum_{m > N/2} e_m / \sum_m e_m$
    \STATE $\mathcal{E}^{(\ell)} \gets \Tr((\bm{X}^{(\ell)})^\top \bm{L}^{(\ell)} \bm{X}^{(\ell)})$
    \STATE $p_m \gets e_m / \sum_r e_r$; $\SE^{(\ell)} \gets -\sum_m p_m \log p_m$
\ENDFOR
\end{algorithmic}
\end{algorithm}

\subsection{Computational Complexity}
\label{app:implementation:complexity}

\begin{table}[h!]
\centering
\begin{tabular}{@{}lc@{}}
\toprule
\textbf{Operation} & \textbf{Complexity} \\
\midrule
Symmetrization (per head) & $O(N^2)$ \\
Aggregation (all heads) & $O(H N^2)$ \\
Laplacian construction & $O(N^2)$ \\
Full eigendecomposition & $O(N^3)$ \\
Randomized SVD ($k$ eigenvalues, Lanczos) & $O(k N^2)$ \\
Graph Fourier Transform & $O(N^2 d)$ \\
Diagnostic computation & $O(N d)$ \\
\midrule
\textbf{Total (per layer, full)} & $O(N^3 + N^2 d)$ \\
\textbf{Total (per layer, Lanczos $k$=50)} & $O(k N^2 + N^2 d)$ \\
\bottomrule
\end{tabular}
\caption{Computational complexity of spectral analysis. For typical proof lengths ($N < 1000$), the full method adds $<$5\% inference overhead. Randomized SVD scales to long contexts.}
\label{tab:complexity}
\end{table}

\textbf{Randomized SVD for Long Contexts.}
To address scalability beyond $O(N^3)$, we implement Lanczos iteration with $k{=}50$ extreme singular values on $A^\top A$. This reduces asymptotic complexity to $O(kN^2)$ and yields wall-clock speedups of up to 61$\times$ while preserving exact topological metrics:

\begin{table}[h!]
\centering
\small
\begin{tabular}{@{}lccc@{}}
\toprule
\textbf{Seq.\ Length} & \textbf{Full eigh} & \textbf{Lanczos ($k$=50)} & \textbf{Speedup} \\
\midrule
1{,}000 & 264\,ms & \textbf{60\,ms} & $4.4\times$ \\
10{,}000 & 47.3\,s & \textbf{773\,ms} & $61\times$ \\
32{,}000 (extrap.) & $\sim$5.1\,min & $\sim$5\,s & $61\times$ \\
\bottomrule
\end{tabular}
\caption{Wall-clock comparison on an A100 GPU. Lanczos ($k$=50) scales to 32k-token contexts at inference speed.}
\label{tab:lanczos}
\end{table}

\newpage

\subsection{Hardware and Runtime}
\label{app:implementation:hardware}

\begin{itemize}[leftmargin=*,topsep=2pt,itemsep=1pt]
\item \textbf{Hardware:} NVIDIA A100 (40GB)
\item \textbf{Inference time per proof:} 1.5--3.0 seconds (depending on model size and proof length)
\item \textbf{Spectral computation overhead:} 50--200ms per proof
\item \textbf{Total runtime for 454 proofs:} 12--20 minutes per model
\end{itemize}

\section{Ablation Study Details}
\label{app:ablations}

\subsection{Human Perturbation Control (Control 2)}
\label{app:ablations:perturbation}

\begin{table}[h!]
\centering
\small
\begin{tabular}{@{}llcccc@{}}
\toprule
\textbf{Metric} & \textbf{Layer} & \textbf{Valid} & \textbf{Pert.} & \textbf{$p$} & \textbf{$d$} \\
\midrule
Fiedler & 6 & 0.455 & 0.433 & $4.2 \times 10^{-6}$ & 0.90 \\
Fiedler & Last & 0.547 & 0.530 & $1.0 \times 10^{-5}$ & 0.90 \\
HFER & 6 & 0.099 & 0.121 & $2.1 \times 10^{-7}$ & $-$1.06 \\
HFER & Last & 0.331 & 0.378 & $1.9 \times 10^{-9}$ & $-$1.10 \\
Smooth & 6 & 0.970 & 0.986 & $1.2 \times 10^{-10}$ & $-$1.17 \\
Smooth & Last & 0.646 & 0.627 & $1.4 \times 10^{-8}$ & 1.12 \\
Entropy & 6 & 1.999 & 1.800 & $1.9 \times 10^{-8}$ & 1.13 \\
Entropy & Last & 3.012 & 2.790 & $1.2 \times 10^{-8}$ & 1.09 \\
\bottomrule
\end{tabular}
\caption{\textbf{Control 2: Human perturbations (Llama-3.2-1B).} All metrics show significant degradation when logic is corrupted while style is held fixed ($n=154$ valid, $n=40$ perturbed).}
\label{tab:exp2_full}
\end{table}

\subsection{Train/Val/Test Split}
\label{app:ablations:split}

\begin{table}[h!]
\centering
\begin{tabular}{@{}lccc@{}}
\toprule
\textbf{Model} & \textbf{Test Acc} & \textbf{Val Acc} & \textbf{Threshold} \\
\midrule
Llama-3.2-1B & 78.0\% & 80.2\% & 0.050 \\
Llama-3.2-3B & 83.5\% & 86.8\% & 0.126 \\
Llama-3.1-8B & 82.4\% & 86.8\% & 0.100 \\
Mistral-7B & 73.6\% & 80.2\% & 0.120 \\
Phi-3.5-mini & 75.8\% & 84.6\% & 0.067 \\
Qwen2.5-0.5B & 83.5\% & 84.6\% & 0.115 \\
Qwen2.5-7B & 80.2\% & 85.7\% & 0.071 \\
\bottomrule
\end{tabular}
\caption{\textbf{Held-out test accuracy} (60/20/20 split). Test accuracies range from 73.6\% to 83.5\%.}
\label{tab:exp3}
\end{table}

\newpage

\subsection{Nested Cross-Validation}
\label{app:ablations:nested}

\begin{table}[h]
\centering
\begin{tabular}{@{}lcc@{}}
\toprule
\textbf{Model} & \textbf{Nested CV Acc} & \textbf{Best Config} \\
\midrule
Llama-3.2-1B & 83.9\% $\pm$ 1.8\% & hfer@L15 \\
Llama-3.2-3B & 82.8\% $\pm$ 1.7\% & hfer@L25 \\
Llama-3.1-8B & 85.9\% $\pm$ 1.3\% & hfer@L20 \\
Mistral-7B & 83.9\% $\pm$ 1.7\% & hfer@L30 \\
Phi-3.5-mini & 84.8\% $\pm$ 1.3\% & smoothness@L25 \\
Qwen2.5-0.5B & 83.5\% $\pm$ 1.5\% & hfer@L15 \\
Qwen2.5-7B & 85.0\% $\pm$ 0.6\% & hfer@L25 \\
\bottomrule
\end{tabular}
\caption{\textbf{Nested CV accuracy} (5-fold outer, 4-fold inner). Accuracies are 82.8--85.9\% across all models.}
\label{tab:exp4}
\end{table}

\subsection{Random Baseline}
\label{app:ablations:baseline}

\begin{table}[h]
\centering
\begin{tabular}{@{}lcc@{}}
\toprule
\textbf{Method} & \textbf{Accuracy} & \textbf{$\Delta$ vs. Majority} \\
\midrule
Majority class (always ``invalid'') & 57.0\% & -  \\
Random guessing & 50.0\% & $-$7.0\% \\
Spectral (single threshold, best) & 95.6\% & +38.6\% \\
Spectral (two-feature rule, best) & 95.6\% & +38.6\% \\
\bottomrule
\end{tabular}
\caption{Comparison to baselines (corrected labels). The spectral method achieves +39\% over the majority-class baseline.}
\label{tab:ablation_baseline}
\end{table}

\subsection{Threshold Robustness}
\label{app:ablations:threshold}

\begin{table}[h]
\centering
\begin{tabular}{@{}ccc@{}}
\toprule
\textbf{Threshold (relative)} & \textbf{Accuracy} & \textbf{$\Delta$ vs. Optimal} \\
\midrule
0.80$\times$ & 89.6\% & $-$4.5\% \\
0.85$\times$ & 91.4\% & $-$2.7\% \\
0.90$\times$ & 93.0\% & $-$1.1\% \\
0.95$\times$ & 93.8\% & $-$0.3\% \\
1.00$\times$ (optimal) & 94.1\% & -  \\
1.05$\times$ & 93.6\% & $-$0.5\% \\
1.10$\times$ & 92.9\% & $-$1.2\% \\
1.15$\times$ & 91.8\% & $-$2.3\% \\
1.20$\times$ & 90.3\% & $-$3.8\% \\
\bottomrule
\end{tabular}
\caption{Threshold robustness (HFER @ L30, Llama-8B, corrected labels). Within $\pm$10\%, accuracy degrades by $<$1.5\%.}
\label{tab:ablation_threshold}
\end{table}

\newpage

\subsection{Problem Difficulty}
\label{app:ablations:difficulty}

\begin{table}[h]
\centering
\begin{tabular}{@{}lccc@{}}
\toprule
\textbf{Source} & \textbf{$n$} & \textbf{Accuracy} & \textbf{95\% CI} \\
\midrule
Olympiad (IMO/Putnam) & 12 & 100.0\% & [73.5\%, 100\%] \\
Competition (AMC/AIME) & 403 & 93.5\% & [90.8\%, 95.6\%] \\
Other (MathD, etc.) & 39 & 87.2\% & [72.6\%, 95.7\%] \\
\midrule
\textbf{Overall} & 454 & 93.4\% & [90.8\%, 95.5\%] \\
\bottomrule
\end{tabular}
\caption{Accuracy stratified by problem difficulty (Llama-8B, corrected labels). Performance is highest on olympiad-level problems.}
\label{tab:ablation_difficulty}
\end{table}

\subsection{Proof Length}
\label{app:ablations:length}

\begin{table}[h]
\centering
\begin{tabular}{@{}lcccc@{}}
\toprule
\textbf{Length Quintile} & \textbf{Token Range} & \textbf{$n$} & \textbf{Accuracy} \\
\midrule
Very Short & $<$ 50 & 91 & 97.8\% \\
Short & 50--100 & 91 & 93.4\% \\
Medium & 100--200 & 90 & 91.1\% \\
Long & 200--400 & 91 & 95.6\% \\
Very Long & $>$ 400 & 91 & 92.3\% \\
\bottomrule
\end{tabular}
\caption{Accuracy by proof length (Llama-8B, corrected labels). Performance is stable across all length ranges.}
\label{tab:ablation_length}
\end{table}

\subsection{Metric Correlations}
\label{app:ablations:correlation}

\begin{table}[h]
\centering
\begin{tabular}{@{}lccccc@{}}
\toprule
& \textbf{Fiedler} & \textbf{HFER} & \textbf{Smooth.} & \textbf{Entropy} & \textbf{Energy} \\
\midrule
Fiedler & 1.000 & $-$0.288 & 0.422 & 0.294 & $-$0.275 \\
HFER & & 1.000 & $-$0.606 & $-$0.975 & 0.892 \\
Smoothness & & & 1.000 & 0.644 & $-$0.468 \\
Entropy & & & & 1.000 & $-$0.925 \\
Energy & & & & & 1.000 \\
\bottomrule
\end{tabular}
\caption{Pearson correlations between spectral metrics (Layer 30, Llama-8B). HFER and Entropy are nearly redundant ($r = -0.975$); Fiedler is largely independent.}
\label{tab:ablation_correlation}
\end{table}

\newpage 

\subsection{Cross-Model Transfer}
\label{app:ablations:transfer}

\begin{table}[h]
\centering
\begin{tabular}{@{}llcccc@{}}
\toprule
\textbf{Source} & \textbf{Target} & \textbf{Src Acc} & \textbf{Tgt Acc (raw)} & \textbf{Drop} \\
\midrule
Llama-8B & Phi-3.5-mini & 94.1\% & 52.4\% & $-$41.7\% \\
Llama-8B & Llama-1B & 94.1\% & 48.9\% & $-$45.2\% \\
Llama-8B & Qwen-7B & 94.1\% & 51.8\% & $-$42.3\% \\
\bottomrule
\end{tabular}
\caption{Cross-model threshold transfer (raw, uncalibrated). Raw thresholds fail due to scale differences.}
\label{tab:ablation_transfer}
\end{table}

\subsection{Head Aggregation Methods}
\label{app:ablations:aggregation}

\begin{table}[h]
\centering
\begin{tabular}{@{}lcc@{}}
\toprule
\textbf{Aggregation} & \textbf{Best $|d|$} & \textbf{Best Accuracy} \\
\midrule
Uniform ($\alpha_h = 1/H$) & 2.91 & 93.6\% \\
Mass-weighted (default) & 3.00 & 94.1\% \\
Max-head only & 2.54 & 91.4\% \\
\bottomrule
\end{tabular}
\caption{Comparison of head aggregation methods (Llama-8B, corrected labels). Mass-weighted performs marginally better.}
\label{tab:ablation_aggregation}
\end{table}

\subsection{Laplacian Normalization}
\label{app:ablations:laplacian}

\begin{table}[h]
\centering
\begin{tabular}{@{}lcc@{}}
\toprule
\textbf{Laplacian Type} & \textbf{Best $|d|$} & \textbf{Best Accuracy} \\
\midrule
Combinatorial (default) & 3.00 & 94.1\% \\
Symmetric normalized & 2.88 & 93.4\% \\
Random walk & 2.81 & 93.0\% \\
\bottomrule
\end{tabular}
\caption{Comparison of Laplacian normalizations (Llama-8B, corrected labels). Results are similar across variants.}
\label{tab:ablation_laplacian}
\end{table}

\newpage

\section{Architectural Variation Details}
\label{app:architectural}

\subsection{The Mistral Sliding Window Attention Effect}
\label{app:mistral}

Mistral-7B presents a striking pattern: while achieving strong statistical significance ($p_{\text{MW}} = 1.16 \times 10^{-48}$, $p_t = 1.21 \times 10^{-78}$), its best discriminating metric differs from other models. Rather than HFER (which dominates for Llama models), Mistral's validity signal appears in late-layer Smoothness (L26, $d = 2.09$).

This is the only model in our evaluation using Sliding Window Attention (SWA)~\citep{beltagy2020longformer,child2019generating}, which restricts attention to a 4096-token local window. Notably, while HFER at L11 still achieves significance for Mistral ($p_{\text{MW}} = 3.94 \times 10^{-49}$, $d = -1.57$), the effect size is substantially weaker than other models, and Smoothness provides stronger discrimination.

\begin{table}[h]
\centering
\small
\begin{tabular}{@{}llcc@{}}
\toprule
\textbf{Model} & \textbf{Attention} & \textbf{Best Metric} & \textbf{$|d|$} \\
\midrule
Llama-3.1-8B & Global (Full) & HFER (L30) & 3.00 \\
Qwen2.5-7B & Global (Full) & HFER (L26) & 2.43 \\
Phi-3.5-mini & Global (Full) & Smooth (L25) & 3.30 \\
\textbf{Mistral-7B} & \textbf{Sliding Window} & \textbf{Smooth (L26)} & \textbf{2.09} \\
\bottomrule
\end{tabular}
\caption{Attention mechanism architecture affects which spectral feature best captures validity. Sliding Window Attention shifts the signal from HFER to Smoothness.}
\label{tab:attention_architecture_app}
\end{table}

We hypothesize that SWA redistributes validity information across spectral features:
\begin{enumerate}[leftmargin=*,topsep=2pt,itemsep=1pt]
\item \textbf{Global attention} concentrates validity in spectral frequency content (HFER), which captures global signal smoothness.
\item \textbf{Local attention} distributes validity into graph smoothness properties, which measure local coherence within attention windows.
\end{enumerate}

\begin{figure}[h]
\centering
\scalebox{0.6}{ 
    \subimport{figures/}{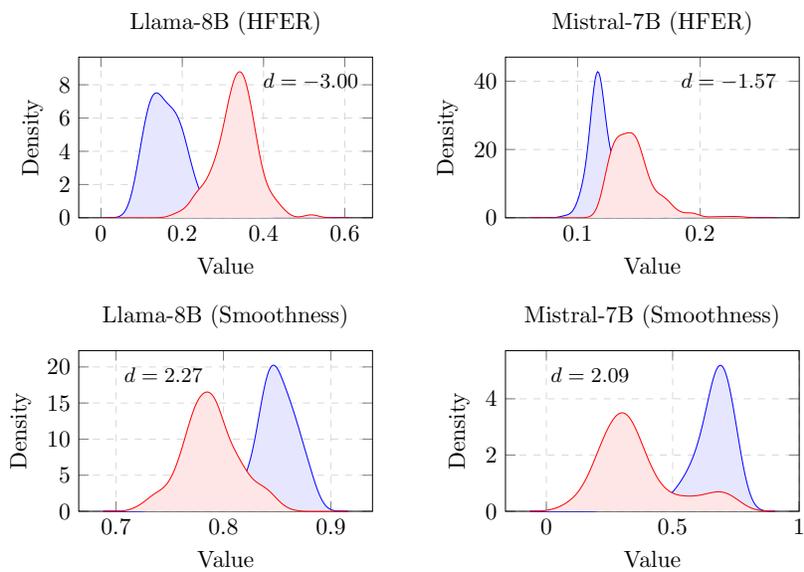}
}
\caption{\textbf{Architectural Determinism of Validity.} Comparing Llama-3.1-8B (Global Attention) and Mistral-7B (Sliding Window Attention). Llama separates proofs via frequency (HFER), while Mistral separates them via local connectivity (Smoothness).}
\label{fig:mistral_effect_app}
\end{figure}

\subsection{Mixture-of-Experts Analysis}
\label{app:moe}

To test the method's limits under sparse computation, we evaluated Qwen-MoE on MiniF2F. Unlike dense models, MoEs route tokens dynamically to different expert networks, potentially fracturing the global attention topology.

\textbf{Signal Attenuation.}
We observe a distinct ``Sparsity Penalty.'' While the spectral signal remains statistically significant ($p < 10^{-10}$), the effect size attenuates from $d \approx 3.0$ (dense) to $d \approx 1.6$ (sparse) on balanced data. We hypothesize that expert routing introduces topological noise, acting as a diffractor that scatters the spectral energy of the attention graph.

\textbf{Entropy as a Routing Proxy.}
The dominant metric shifts to Spectral Entropy (Accuracy 83.0\% at Layer 14). Valid proofs exhibit significantly lower entropy ($\mu=2.17$) than invalid ones ($\mu=2.52$). In an MoE context, this suggests that valid reasoning corresponds to ``focused routing'', coherent attention to relevant contexts, whereas hallucinations manifest as ``routing entropy.''

\textbf{Error Agnosticism.}
A taxonomic breakdown reveals that the spectral signature is remarkably error-agnostic. The effect size for detecting \emph{Logic Errors} ($d=0.56$) is nearly identical to that for \emph{Calculation Errors} ($d=0.53$), showing that even minor arithmetic deviations induce measurable geometric stress in the attention graph.

\begin{table}[h]
\centering
\small
\setlength{\tabcolsep}{3pt}
\begin{tabular}{@{}llcccc@{}}
\toprule
\textbf{Setting} & \textbf{Best Metric} & \textbf{Layer} & \textbf{$p_{\text{MW}}$} & \textbf{$|d|$} & \textbf{Acc.} \\
\midrule
\multicolumn{6}{c}{\textit{Qwen-MoE on MiniF2F (Balanced, $n=100$)}} \\
\midrule
& Entropy & L14 & $1.03 \times 10^{-10}$ & 1.67 & 83.0\% \\
& HFER & L22 & $1.29 \times 10^{-10}$ & 1.65 & 81.0\% \\
& Smoothness & L12 & $1.80 \times 10^{-9}$ & 1.50 & 83.0\% \\
\midrule
\multicolumn{6}{c}{\textit{Qwen-MoE on MiniF2F (Full, $n=454$)}} \\
\midrule
& Smoothness & L6 & $3.16 \times 10^{-67}$ & 2.73 & 93.6\% \\
& HFER & L22 & $2.02 \times 10^{-65}$ & 2.54 & 90.5\% \\
\bottomrule
\end{tabular}
\caption{\textbf{Mixture-of-Experts Results.} On balanced data, effect sizes attenuate to $d \approx 1.6$ (``sparsity penalty''). On the full dataset, the signal recovers to $d = 2.73$.}
\label{tab:moe_results_app}
\end{table}

\subsection{Cognitive Interpretation}
\label{app:cognitive}

The asymmetry between false positive and false negative errors suggests a cognitive interpretation of the spectral signature:

\begin{itemize}[leftmargin=*,topsep=2pt,itemsep=1pt]
\item \textbf{Invalid proofs with low spectral energy} are ``confidently wrong'', the model processes them smoothly despite their incorrectness, analogous to the Dunning-Kruger effect in human cognition.
\item \textbf{Valid proofs with high spectral energy} involve genuine cognitive effort, the model recognizes difficulty and allocates more computational resources, producing higher-frequency activity.
\end{itemize}

Under this interpretation, the spectral signature reflects not just validity, but the model's \textbf{epistemic state}, its implicit certainty about its own reasoning. Low HFER indicates confident processing (justified for valid proofs, unjustified for ``confidently wrong'' invalid proofs); high HFER indicates effortful processing.

This interpretation suggests applications beyond validity classification:
\begin{itemize}[leftmargin=*,topsep=2pt,itemsep=1pt]
\item \textbf{Difficulty estimation:} High-energy valid proofs may indicate genuinely challenging problems.
\item \textbf{Overconfidence detection:} Low-energy invalid proofs may signal dangerous overconfidence.
\item \textbf{Uncertainty quantification:} Spectral features could complement or replace learned uncertainty estimates.
\end{itemize}

\newpage

\section{Complete Results Tables}
\label{app:results}

\subsection{Label Correction Statistics}
\label{app:results:labels}

\begin{table}[h]
\centering
\begin{tabular}{@{}llcccc@{}}
\toprule
\textbf{Model} & \textbf{Family} & \textbf{Orig. Valid} & \textbf{Corr. Valid} & \textbf{Reclaimed} & \textbf{\% Change} \\
\midrule
Llama-3.2-1B & Meta & 154 & 205 & +51 & +33.1\% \\
Llama-3.2-3B & Meta & 154 & 195 & +41 & +26.6\% \\
Llama-3.1-8B & Meta & 154 & 193 & +39 & +25.3\% \\
Qwen2.5-0.5B & Alibaba & 154 & 194 & +40 & +26.0\% \\
Qwen2.5-7B & Alibaba & 154 & 187 & +33 & +21.4\% \\
Phi-3.5-mini & Microsoft & 154 & 205 & +51 & +33.1\% \\
Mistral-7B & Mistral AI & 154 & 190 & +36 & +23.4\% \\
\bottomrule
\end{tabular}
\caption{Label correction statistics across all models. All seven models independently identify 33--51 compiler-rejected proofs as spectrally valid.}
\label{tab:label_correction}
\end{table}

\section{Reproducibility Checklist}
\label{app:reproducibility}

\textbf{Code Availability.}
The complete implementation including attention extraction, spectral computation, threshold optimization, and Lanczos-accelerated eigendecomposition is available at \url{https://github.com/vcnoel/geometry-of-reason}.

\textbf{Data Availability.} We use the publicly available MiniF2F benchmark~\citep{zheng2022minif2f}. Our 454-proof evaluation subset with all labels (original and corrected) and metadata will be released.

\textbf{Compute Requirements.} All experiments can be reproduced on a single NVIDIA A100 GPU (40GB) in under 3 hours total. Memory requirements are dominated by model inference, not spectral computation.

\textbf{Hyperparameters.} The only hyperparameter is the classification threshold $\tau$. We report results using both full-data calibrated thresholds and nested cross-validation. Threshold robustness analysis (\Cref{app:ablations:threshold}) confirms low sensitivity.

\textbf{Statistical Reporting.} All $p$-values are reported using both non-parametric Mann-Whitney U tests ($p_{\text{MW}}$) and parametric two-sided Welch's $t$-tests ($p_t$) for full transparency and robustness. We apply Benjamini-Hochberg correction for multiple comparisons when scanning over layer-metric combinations. Effect sizes use Cohen's $d$ with pooled standard deviation.

\textbf{Random Seeds.} No random seeds are required: our method is fully deterministic given model weights.

\subsection{Top 10 Discriminators per Model on MiniF2F (Corrected Labels)}
\label{app:results:top10}

\begin{table*}[h]
\centering
\small
\setlength{\tabcolsep}{3pt}
\begin{tabular}{@{}llccccc@{}}
\toprule
\textbf{Metric} & \textbf{Layer} & \textbf{$p_{\text{MW}}$} & \textbf{$p_t$} & \textbf{Cohen's $d$} & \textbf{Valid $\mu$} & \textbf{Invalid $\mu$} \\
\midrule
\multicolumn{7}{c}{\textbf{Llama-3.1-8B-Instruct} (193 Valid, 261 Invalid)} \\
\midrule
HFER & L0 & $7.94 \times 10^{-65}$ & $2.43 \times 10^{-93}$ & $-$2.88 & 0.193 & 0.327 \\
Smoothness & L10 & $3.31 \times 10^{-64}$ & $2.03 \times 10^{-92}$ & $-$2.69 & 0.977 & 1.005 \\
Smoothness & L15 & $6.67 \times 10^{-64}$ & $5.53 \times 10^{-80}$ & $-$2.11 & 0.967 & 0.997 \\
Smoothness & L9 & $7.82 \times 10^{-64}$ & $2.26 \times 10^{-92}$ & $-$2.82 & 0.974 & 1.003 \\
HFER & L31 & $9.29 \times 10^{-64}$ & $2.45 \times 10^{-94}$ & $-$2.56 & 0.305 & 0.448 \\
HFER & L30 & $9.40 \times 10^{-64}$ & $5.44 \times 10^{-105}$ & $-$3.00 & 0.170 & 0.333 \\
Smoothness & L12 & $1.16 \times 10^{-63}$ & $8.41 \times 10^{-84}$ & $-$2.63 & 0.963 & 0.988 \\
Smoothness & L11 & $1.86 \times 10^{-63}$ & $3.91 \times 10^{-86}$ & $-$2.73 & 0.972 & 0.999 \\
Smoothness & L8 & $3.11 \times 10^{-63}$ & $1.04 \times 10^{-88}$ & $-$2.74 & 0.974 & 1.003 \\
Smoothness & L13 & $3.19 \times 10^{-63}$ & $1.85 \times 10^{-74}$ & $-$2.22 & 0.964 & 0.983 \\
\bottomrule
\end{tabular}
\caption{\textbf{Top 10 discriminators: Llama-3.1-8B-Instruct.} Llama-3.1-8B-Instruct consistently favor HFER and Smoothness at mid-to-late layers, with effect sizes $|d| \geq 2.88$.}
\label{tab:top10_llama8b}
\end{table*}

\newpage

\begin{table*}[h]
\centering
\small
\setlength{\tabcolsep}{3pt}
\begin{tabular}{@{}llccccc@{}}
\toprule
\textbf{Metric} & \textbf{Layer} & \textbf{$p_{\text{MW}}$} & \textbf{$p_t$} & \textbf{Cohen's $d$} & \textbf{Valid $\mu$} & \textbf{Invalid $\mu$} \\
\midrule
\multicolumn{7}{c}{\textbf{Llama-3.2-1B-Instruct} (205 Valid, 249 Invalid)} \\
\midrule
HFER & L15 & $5.73 \times 10^{-67}$ & $1.01 \times 10^{-100}$ & $-$2.66 & 0.332 & 0.450 \\
Smoothness & L7 & $8.79 \times 10^{-65}$ & $9.42 \times 10^{-79}$ & $-$2.65 & 0.963 & 0.986 \\
Entropy & L15 & $1.77 \times 10^{-64}$ & $2.34 \times 10^{-88}$ & +2.62 & 3.007 & 2.550 \\
HFER & L11 & $2.02 \times 10^{-64}$ & $7.04 \times 10^{-103}$ & $-$2.69 & 0.051 & 0.107 \\
Smoothness & L8 & $2.31 \times 10^{-64}$ & $3.63 \times 10^{-81}$ & $-$2.52 & 0.960 & 0.981 \\
Energy & L9 & $5.14 \times 10^{-64}$ & $1.74 \times 10^{-85}$ & $-$2.66 & 608507 & 626347 \\
Fiedler & L0 & $1.47 \times 10^{-63}$ & $1.83 \times 10^{-92}$ & +3.02 & 0.536 & 0.412 \\
HFER & L0 & $1.62 \times 10^{-63}$ & $2.60 \times 10^{-110}$ & $-$3.00 & 0.158 & 0.297 \\
Smoothness & L6 & $3.27 \times 10^{-63}$ & $1.07 \times 10^{-81}$ & $-$2.75 & 0.970 & 0.999 \\
Smoothness & L0 & $3.79 \times 10^{-63}$ & $7.07 \times 10^{-105}$ & +2.70 & 0.870 & 0.815 \\
\midrule
\multicolumn{7}{c}{\textbf{Llama-3.2-3B-Instruct} (195 Valid, 259 Invalid)} \\
\midrule
HFER & L0 & $3.16 \times 10^{-63}$ & $1.85 \times 10^{-101}$ & $-$2.88 & 0.233 & 0.448 \\
HFER & L11 & $3.66 \times 10^{-62}$ & $6.06 \times 10^{-102}$ & $-$2.97 & 0.106 & 0.187 \\
HFER & L26 & $6.45 \times 10^{-62}$ & $4.50 \times 10^{-106}$ & $-$2.75 & 0.126 & 0.273 \\
Smoothness & L11 & $9.83 \times 10^{-62}$ & $3.39 \times 10^{-83}$ & $-$2.56 & 0.969 & 0.999 \\
Smoothness & L9 & $2.00 \times 10^{-61}$ & $1.52 \times 10^{-89}$ & $-$2.73 & 0.978 & 1.012 \\
Smoothness & L8 & $2.07 \times 10^{-61}$ & $1.82 \times 10^{-90}$ & $-$2.70 & 0.975 & 1.006 \\
Entropy & L25 & $3.86 \times 10^{-61}$ & $1.11 \times 10^{-92}$ & $-$2.84 & 1.824 & 2.450 \\
Entropy & L27 & $3.90 \times 10^{-61}$ & $2.39 \times 10^{-92}$ & +2.56 & 3.026 & 2.437 \\
Smoothness & L10 & $4.56 \times 10^{-61}$ & $2.70 \times 10^{-80}$ & $-$2.60 & 0.973 & 1.001 \\
Smoothness & L0 & $6.60 \times 10^{-61}$ & $3.49 \times 10^{-89}$ & +2.39 & 0.794 & 0.721 \\
\bottomrule
\end{tabular}
\caption{\textbf{Top 10 discriminators: Meta Llama family.} The Llama models consistently favor HFER and Smoothness at mid-to-late layers, with effect sizes $|d| \geq 2.66$. Llama-1B uniquely shows strong Fiedler discrimination at L0 ($d = 3.02$).}
\label{tab:top10_llama}
\end{table*}

\newpage 

\begin{table*}[h]
\centering
\small
\setlength{\tabcolsep}{3pt}
\begin{tabular}{@{}llccccc@{}}
\toprule
\textbf{Metric} & \textbf{Layer} & \textbf{$p_{\text{MW}}$} & \textbf{$p_t$} & \textbf{Cohen's $d$} & \textbf{Valid $\mu$} & \textbf{Invalid $\mu$} \\
\midrule
\multicolumn{7}{c}{\textbf{Qwen2.5-0.5B-Instruct} (194 Valid, 260 Invalid)} \\
\midrule
Entropy & L0 & $4.45 \times 10^{-65}$ & $1.43 \times 10^{-116}$ & +2.93 & 3.317 & 2.401 \\
Smoothness & L3 & $3.84 \times 10^{-64}$ & $2.05 \times 10^{-53}$ & $-$2.22 & 0.951 & 0.964 \\
Energy & L4 & $1.75 \times 10^{-63}$ & $1.28 \times 10^{-81}$ & $-$2.75 & 1553410 & 1598135 \\
Smoothness & L14 & $5.78 \times 10^{-63}$ & $7.43 \times 10^{-81}$ & $-$2.70 & 0.931 & 0.965 \\
Smoothness & L15 & $7.11 \times 10^{-63}$ & $2.02 \times 10^{-72}$ & $-$2.19 & 0.939 & 0.959 \\
Entropy & L1 & $1.14 \times 10^{-62}$ & $1.21 \times 10^{-96}$ & +2.88 & 2.718 & 1.839 \\
Energy & L16 & $1.87 \times 10^{-62}$ & $1.86 \times 10^{-101}$ & $-$2.70 & 1738815 & 1876539 \\
Energy & L19 & $1.94 \times 10^{-62}$ & $7.15 \times 10^{-101}$ & $-$2.82 & 1742982 & 1945174 \\
Smoothness & L4 & $2.02 \times 10^{-62}$ & $7.30 \times 10^{-59}$ & $-$2.10 & 0.939 & 0.954 \\
Energy & L17 & $3.65 \times 10^{-62}$ & $7.48 \times 10^{-101}$ & $-$2.72 & 1717977 & 1874831 \\
\midrule
\multicolumn{7}{c}{\textbf{Qwen2.5-7B-Instruct} (187 Valid, 267 Invalid)} \\
\midrule
HFER & L26 & $5.68 \times 10^{-55}$ & $2.45 \times 10^{-75}$ & $-$2.43 & 0.236 & 0.405 \\
Entropy & L2 & $9.59 \times 10^{-55}$ & $7.43 \times 10^{-79}$ & +2.38 & 2.449 & 1.845 \\
HFER & L12 & $2.17 \times 10^{-54}$ & $6.85 \times 10^{-74}$ & $-$1.94 & 0.073 & 0.105 \\
Entropy & L1 & $7.45 \times 10^{-54}$ & $2.05 \times 10^{-68}$ & +2.28 & 1.932 & 1.147 \\
Smoothness & L23 & $7.62 \times 10^{-54}$ & $4.66 \times 10^{-83}$ & +2.21 & 0.752 & 0.560 \\
Energy & L23 & $9.44 \times 10^{-54}$ & $1.95 \times 10^{-80}$ & +2.13 & 150401749 & 116500534 \\
Smoothness & L24 & $1.34 \times 10^{-53}$ & $2.24 \times 10^{-86}$ & +2.33 & 0.689 & 0.396 \\
Energy & L24 & $1.43 \times 10^{-53}$ & $2.59 \times 10^{-84}$ & +2.26 & 137099662 & 83429886 \\
HFER & L9 & $1.46 \times 10^{-53}$ & $8.21 \times 10^{-67}$ & $-$1.79 & 0.069 & 0.105 \\
Smoothness & L14 & $1.58 \times 10^{-53}$ & $4.80 \times 10^{-83}$ & +2.17 & 0.887 & 0.833 \\
\bottomrule
\end{tabular}
\caption{\textbf{Top 10 discriminators: Alibaba Qwen family.} Qwen-0.5B uniquely favors Spectral Entropy at early layers ($d = 2.93$ at L0), achieving $p_t = 1.43 \times 10^{-116}$, the most significant result in our study. Qwen-7B shows a more diverse metric profile with HFER, Entropy, and Smoothness all contributing.}
\label{tab:top10_qwen}
\end{table*}
\newpage
\begin{table*}[h]
\centering
\small
\setlength{\tabcolsep}{3pt}
\begin{tabular}{@{}llccccc@{}}
\toprule
\textbf{Metric} & \textbf{Layer} & \textbf{$p_{\text{MW}}$} & \textbf{$p_t$} & \textbf{Cohen's $d$} & \textbf{Valid $\mu$} & \textbf{Invalid $\mu$} \\
\midrule
\multicolumn{7}{c}{\textbf{Phi-3.5-mini-Instruct} (205 Valid, 249 Invalid)} \\
\midrule
Smoothness & L27 & $1.69 \times 10^{-66}$ & $5.60 \times 10^{-100}$ & +3.21 & 0.496 & 0.239 \\
Smoothness & L26 & $2.37 \times 10^{-66}$ & $1.86 \times 10^{-103}$ & +3.27 & 0.464 & 0.153 \\
Smoothness & L25 & $4.51 \times 10^{-66}$ & $2.33 \times 10^{-107}$ & +3.30 & 0.438 & 0.076 \\
Smoothness & L28 & $4.74 \times 10^{-66}$ & $2.03 \times 10^{-95}$ & +3.12 & 0.543 & 0.343 \\
HFER & L26 & $5.64 \times 10^{-66}$ & $2.41 \times 10^{-95}$ & $-$3.16 & 0.312 & 0.560 \\
Energy & L26 & $9.95 \times 10^{-66}$ & $5.86 \times 10^{-75}$ & +2.28 & 21029690 & 10964297 \\
Smoothness & L23 & $2.64 \times 10^{-65}$ & $4.34 \times 10^{-113}$ & +3.29 & 0.395 & $-$0.095 \\
Smoothness & L24 & $3.29 \times 10^{-65}$ & $6.49 \times 10^{-111}$ & +3.30 & 0.418 & $-$0.012 \\
Energy & L25 & $4.10 \times 10^{-65}$ & $6.39 \times 10^{-101}$ & +2.79 & 18433282 & 4584151 \\
Smoothness & L22 & $6.16 \times 10^{-65}$ & $3.54 \times 10^{-114}$ & +3.28 & 0.384 & $-$0.144 \\
\midrule
\multicolumn{7}{c}{\textbf{Mistral-7B-v0.1-Instruct} (190 Valid, 264 Invalid)} \\
\midrule
HFER & L11 & $3.94 \times 10^{-49}$ & $1.10 \times 10^{-52}$ & $-$1.57 & 0.121 & 0.145 \\
Smoothness & L26 & $1.16 \times 10^{-48}$ & $1.21 \times 10^{-78}$ & +2.09 & 0.639 & 0.354 \\
HFER & L30 & $2.85 \times 10^{-48}$ & $3.78 \times 10^{-67}$ & $-$1.87 & 0.234 & 0.313 \\
Smoothness & L25 & $3.45 \times 10^{-48}$ & $1.25 \times 10^{-77}$ & +2.07 & 0.627 & 0.327 \\
Smoothness & L24 & $6.31 \times 10^{-48}$ & $4.07 \times 10^{-77}$ & +2.05 & 0.630 & 0.334 \\
Energy & L25 & $2.60 \times 10^{-47}$ & $1.72 \times 10^{-69}$ & +1.86 & 128639 & 81926 \\
Smoothness & L27 & $3.38 \times 10^{-47}$ & $3.02 \times 10^{-75}$ & +2.04 & 0.640 & 0.360 \\
Energy & L24 & $4.63 \times 10^{-47}$ & $1.77 \times 10^{-69}$ & +1.86 & 125535 & 79445 \\
Smoothness & L23 & $6.02 \times 10^{-47}$ & $1.49 \times 10^{-74}$ & +2.00 & 0.674 & 0.415 \\
Energy & L26 & $6.08 \times 10^{-47}$ & $1.37 \times 10^{-64}$ & +1.76 & 135930 & 95322 \\
\bottomrule
\end{tabular}
\caption{\textbf{Top 10 discriminators: Microsoft Phi and Mistral AI.} Phi-3.5-mini exhibits the largest effect sizes in our study ($d = 3.30$) via late-layer Smoothness, with invalid proofs showing negative smoothness (L22--L24). Mistral-7B's Sliding Window Attention shifts the optimal metric from HFER ($d = 1.57$) to Smoothness ($d = 2.09$), demonstrating that attention mechanism architecture determines which spectral features best capture reasoning validity.}
\label{tab:top10_phi_mistral}
\end{table*}
\newpage
\begin{table}[h]
\centering
\setlength{\tabcolsep}{3pt}
\begin{tabular}{@{}llccccc@{}}
\toprule
\textbf{Metric} & \textbf{Layer} & \textbf{$p_{\text{MW}}$} & \textbf{$p_t$} & \textbf{Cohen's $d$} & \textbf{Valid $\mu$} & \textbf{Invalid $\mu$} \\
\midrule
\multicolumn{7}{c}{\textbf{Qwen-MoE-A2.7B} (194 Valid, 260 Invalid)} \\
\midrule
Smoothness & L13 & $2.79 \times 10^{-67}$ & $2.49 \times 10^{-84}$ & $-$2.19 & 0.931 & 0.971 \\
Smoothness & L6 & $3.16 \times 10^{-67}$ & $4.34 \times 10^{-92}$ & $-$2.73 & 0.937 & 0.980 \\
Smoothness & L12 & $6.22 \times 10^{-67}$ & $5.47 \times 10^{-83}$ & $-$2.15 & 0.942 & 0.995 \\
Smoothness & L11 & $1.92 \times 10^{-66}$ & $4.22 \times 10^{-88}$ & $-$2.25 & 0.950 & 1.003 \\
Smoothness & L8 & $2.20 \times 10^{-66}$ & $1.03 \times 10^{-87}$ & $-$2.25 & 0.940 & 0.986 \\
Smoothness & L7 & $3.24 \times 10^{-66}$ & $1.45 \times 10^{-86}$ & $-$2.38 & 0.941 & 0.972 \\
Smoothness & L14 & $1.14 \times 10^{-65}$ & $1.93 \times 10^{-94}$ & $-$2.39 & 0.949 & 1.006 \\
HFER & L22 & $2.02 \times 10^{-65}$ & $1.68 \times 10^{-98}$ & $-$2.54 & 0.272 & 0.378 \\
Smoothness & L10 & $2.82 \times 10^{-65}$ & $2.21 \times 10^{-81}$ & $-$2.13 & 0.943 & 0.989 \\
Smoothness & L9 & $4.45 \times 10^{-65}$ & $3.38 \times 10^{-86}$ & $-$2.23 & 0.943 & 0.990 \\
\bottomrule
\end{tabular}
\caption{\textbf{Top 10 discriminators: Qwen-MoE-A2.7B.} Smoothness dominates across early-to-mid layers ($d$ up to $2.73$ at L6), with HFER providing complementary discrimination at L22 ($d = 2.54$). Effect sizes are attenuated relative to dense models, consistent with the ``sparsity penalty'' reported in \Cref{app:moe}.}
\label{tab:top10_moe}
\end{table}

\newpage

\section{Ethical Considerations}
The authors identify no significant ethical risks associated with the mechanistic probing techniques presented in this work. Regarding the preparation of this manuscript,
Large Language Models (specifically Gemini 3.0 Pro) were used to 
assist with prose refinement, grammatical cleaning, and code refactoring. 
All outputs were manually reviewed and verified by the authors, who take 
full responsibility for the final content and technical accuracy of the paper.

\section{Every layer plots}


\begin{figure}[h]
    \centering
    \includegraphics[width=1.1\linewidth]{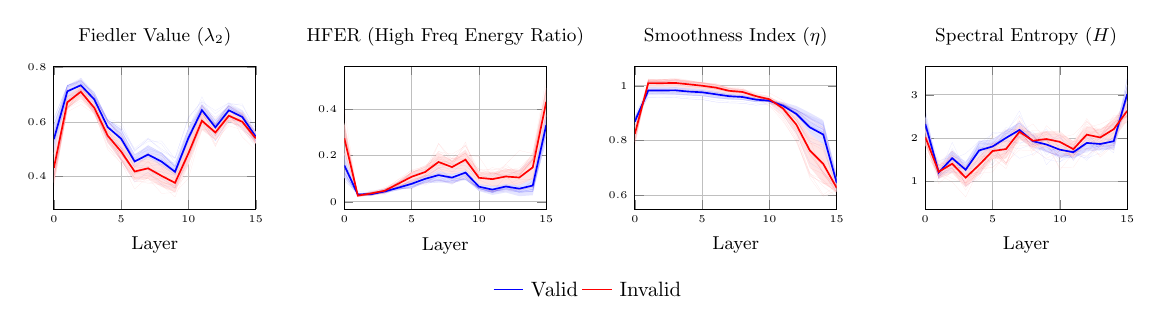}
    \caption{Layer-wise spectral metrics (Combined) for Llama 3.2 1B Instruct.}
    \label{fig:llama1b_combined}
\end{figure}


\begin{figure}[h]
    \centering
    \includegraphics[width=1.1\linewidth]{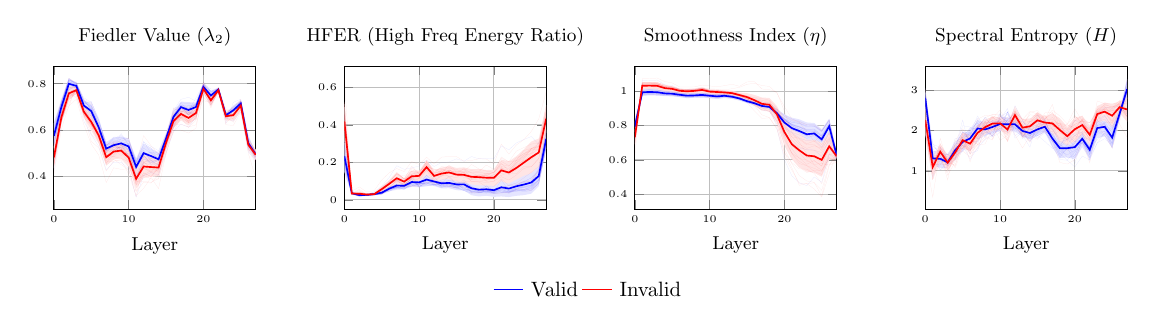}
    \caption{Layer-wise spectral metrics (Combined) for Llama 3.2 3B Instruct.}
    \label{fig:llama3b_combined}
\end{figure}


\begin{figure}[h]
    \centering
    \includegraphics[width=1.1\linewidth]{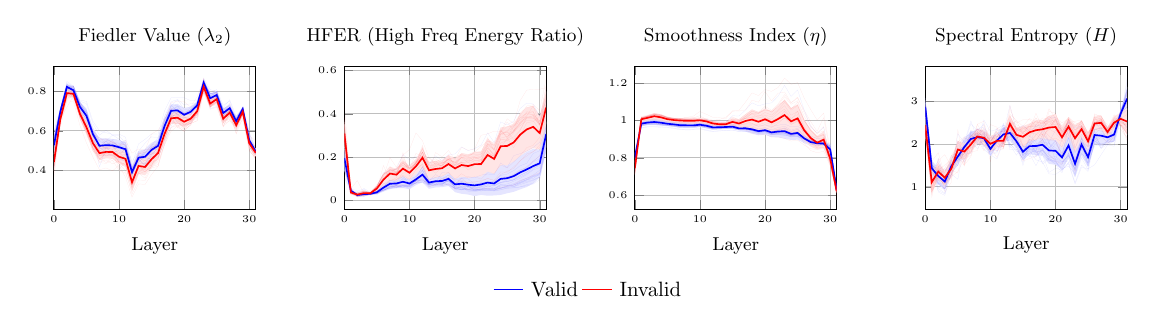}
    \caption{Layer-wise spectral metrics (Combined) for Llama 3.1 8B Instruct.}
    \label{fig:llama8b_combined}
\end{figure}


\begin{figure}[h]
    \centering
    \includegraphics[width=1.1\linewidth]{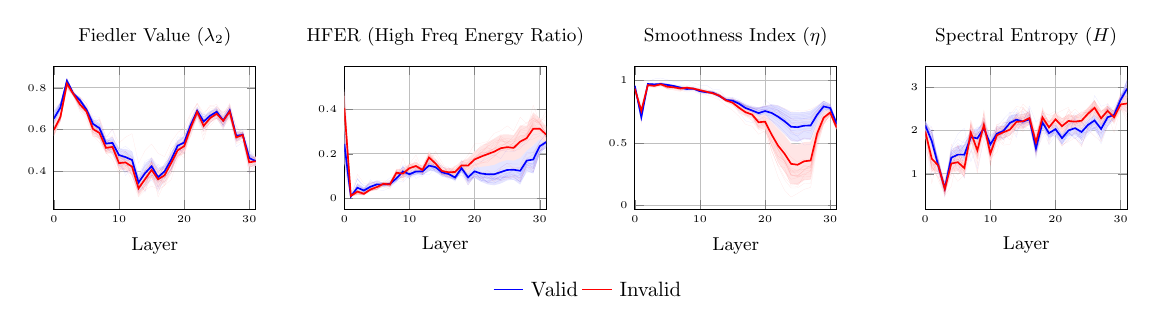}
    \caption{Layer-wise spectral metrics (Combined) for Mistral 7B.}
    \label{fig:mistral7b_combined}
\end{figure}


\begin{figure}[h]
    \centering
    \includegraphics[width=1.1\linewidth]{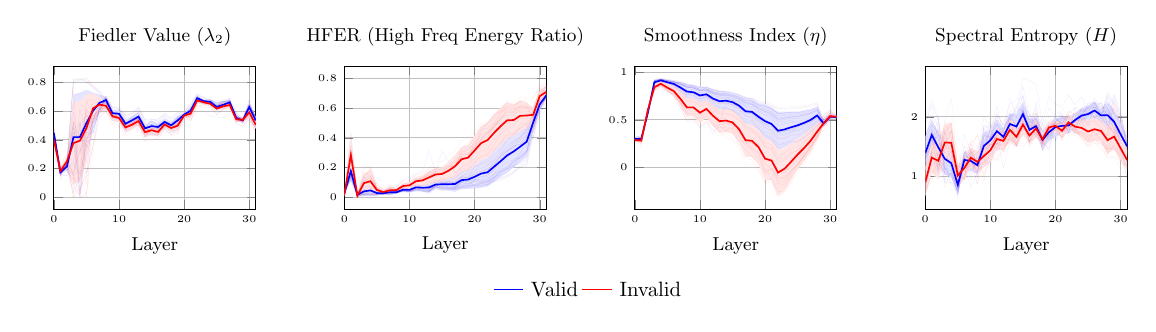}
    \caption{Layer-wise spectral metrics (Combined) for Phi 3.5 Mini Instruct.}
    \label{fig:phi35_combined}
\end{figure}


\begin{figure}[h]
    \centering
    \includegraphics[width=1.1\linewidth]{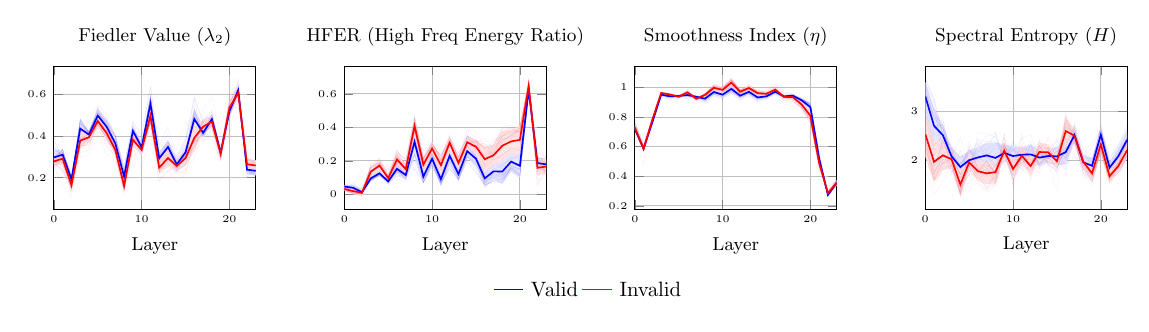}
    \caption{Layer-wise spectral metrics (Combined) for Qwen 2.5 0.5B Instruct.}
    \label{fig:qwen05b_combined}
\end{figure}


\begin{figure}[h]
    \centering
    \includegraphics[width=1.1\linewidth]{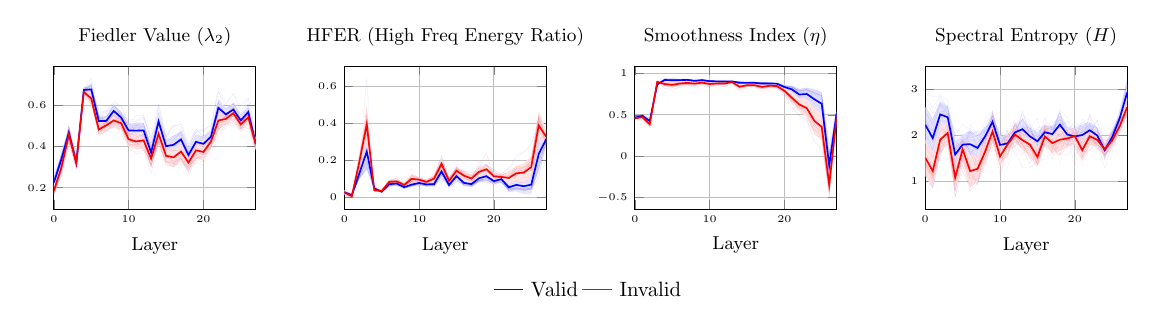}
    \caption{Layer-wise spectral metrics (Combined) for Qwen 2.5 7B Instruct.}
    \label{fig:qwen7b_combined}
\end{figure}


\begin{figure}[h]
    \centering
    \includegraphics[width=1.1\linewidth]{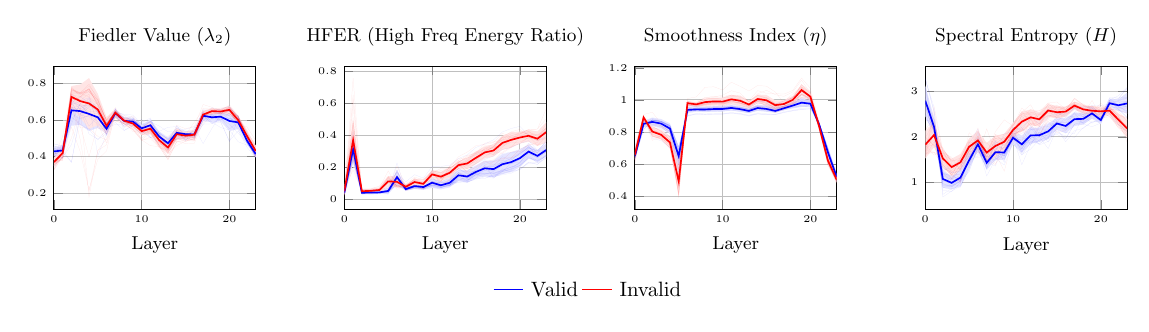}
    \caption{Layer-wise spectral metrics (Combined) for Qwen 1.5 MoE A2.7b Chat.}
    \label{fig:qwenmoe2.7b_combined}
\end{figure}

\newpage

\end{document}